\theoremstyle{plain}
\newtheorem{theorem}{Theorem}[section]
\theoremstyle{definition}
\theoremstyle{remark}
\begin{document}

\title{Provable Filter for Real-world Graph Clustering}

\author{Xuanting Xie, Erlin Pan, Zhao Kang, ~\IEEEmembership{Member,~IEEE,} Wenyu Chen and Bingheng Li\thanks{This work was supported by the National Natural Science
Foundation of China under Grant No. U24A20323; in part by Sichuan Provincial Department of Science and Technology, Natural Science Fund for Innovative Research Group Project (2024NSFTD0033). \textit{Corresponding author: Zhao Kang.}}
\thanks{X. Xie, Z. Kang, W. Chen are with the School of Computer Science and Engineering, University of Electronic Science and Technology of China, Chengdu, China; E. Pan is with Alibaba Group; B. Li is with Michigan State University;  (e-mail: x624361380@outlook.com; wujisixsix6@gmail.com; \{zkang, cwy\}@uestc.edu.cn; bingheng86@gmail.com).}}

\markboth{Journal of \LaTeX\ Class Files,~Vol.~14, No.~8, August~2021}%
{Shell \MakeLowercase{\textit{et al.}}: A Sample Article Using IEEEtran.cls for IEEE Journals}


\maketitle

\begin{abstract}
Graph clustering, an important unsupervised problem, has been shown to be more resistant to advances in Graph Neural Networks (GNNs). Most existing methods focus on homophilic graphs and ignore heterophily. This significantly limits their applicability, since real-world graphs exhibit a structural disparity and cannot simply be classified as homophily and heterophily. To fill this gap, we provide a principled way to handle practical graphs. Interestingly, we find that most homophilic and heterophilic edges can be correctly identified on the basis of neighbor information. Motivated by it, we construct two graphs that are highly homophilic and heterophilic, respectively. They are used to build low-pass and high-pass filters to capture holistic information. We make the first attempt to provide a theoretical analysis connecting the relationship between filters and clustering performance. Important features are further enhanced by the squeeze-and-excitation block. We validate our approach through extensive experiments on both homophilic and heterophilic graphs, as well as a co-saliency detection application. Empirical results demonstrate the superiority of our method compared to state-of-the-art clustering methods. In particular, our method achieves an average accuracy improvement of 1.82\% on heterophilic graphs and 0.83\% on homophilic graphs compared to the best baselines, including RGSL and DGCN.
\end{abstract}

\begin{IEEEkeywords}
Graph filter, Graph Neural Networks, Heterophily, Co-saliency detection.
\end{IEEEkeywords}

\section{Introduction}
\IEEEPARstart{D}{ue} to the \textcolor{black}{rapid} expansion of graph data, there has been a \textcolor{black}{growing} interest in attributed graph clustering \cite{kang2024cdc}. This interest arises from the observation that many real-world graphs exhibit \textcolor{black}{locally heterogeneous} edge distributions, leading to node clusters. Graph clustering has shown \textcolor{black}{considerable} value in several applications, including data exploration \cite{fortunato2016community,lv2025toward}, visualization \cite{cui2008geometry}, anomaly detection \cite{perozzi2018discovering}, and feature discovery \cite{li2025ermav}. However, it has \textcolor{black}{remained largely unaffected} by advances in Graph Neural Networks (GNNs) \cite{tsitsulin2023graph,li2025unveiling}. A common approach relies on the graph autoencoder \cite{tian2014learning,chen2025adaptive} to generate node embeddings, which are subsequently fed into traditional clustering algorithms. Recent developments have produced several variants, including adversarial methods \cite{ARVGA,chen2024gress} and generative methods \cite{cheng2021multi}. Contrastive learning is also widely employed to enhance the \textcolor{black}{discriminative power of representations}. These methods typically predefine positive and negative pairs, \textcolor{black}{maximizing similarity among positives and increasing dissimilarity between positives and negatives} \cite{CCGC, SCGC, shen2025heterophily}. Finally, some shallow methods generate a new graph \cite{MCGC, shen2024beyond}, which is then \textcolor{black}{used by clustering algorithms to derive clusters}.

Existing methods encounter two \textcolor{black}{fundamental yet critical} problems. The first issue is \textcolor{black}{heterophily} \cite{guo2025disentangling}. Most methods assume that \textcolor{black}{homophily} is a key graph property, where connected nodes belong to the same cluster, while ignoring \textcolor{black}{heterophilic graphs}, in which connected nodes belong to different clusters. Heterophilic graphs are \textcolor{black}{common in practice} \cite{H2GCN,fangbenefits}. Methods designed for homophilic graphs are generally ineffective, and \textcolor{black}{stacked MLPs can even outperform many GNNs in heterophilic settings} \cite{H2GCN}. In practice, graphs inherently contain both homophilous and heterophilous neighbors, displaying \textcolor{black}{structural heterogeneity} \cite{EERM}. Consequently, methods that capture only low-frequency information in homophilic graphs or high-frequency information in heterophilic graphs are limited, \textcolor{black}{inevitably leading to information loss}. This limitation \textcolor{black}{substantially restricts the practical applicability of current graph learning techniques}. The second issue is that most clustering methods rely solely on local graph convolution, \textcolor{black}{failing to capture global structural information} \cite{li2018deeper}. Local information aggregation becomes less effective when low-degree nodes have limited neighborhoods, while \textcolor{black}{global information propagation is crucial for heterophilic graphs} \cite{Global-hete}.

 \begin{figure}[t]
    \centering
    \includegraphics[width=1.\linewidth]{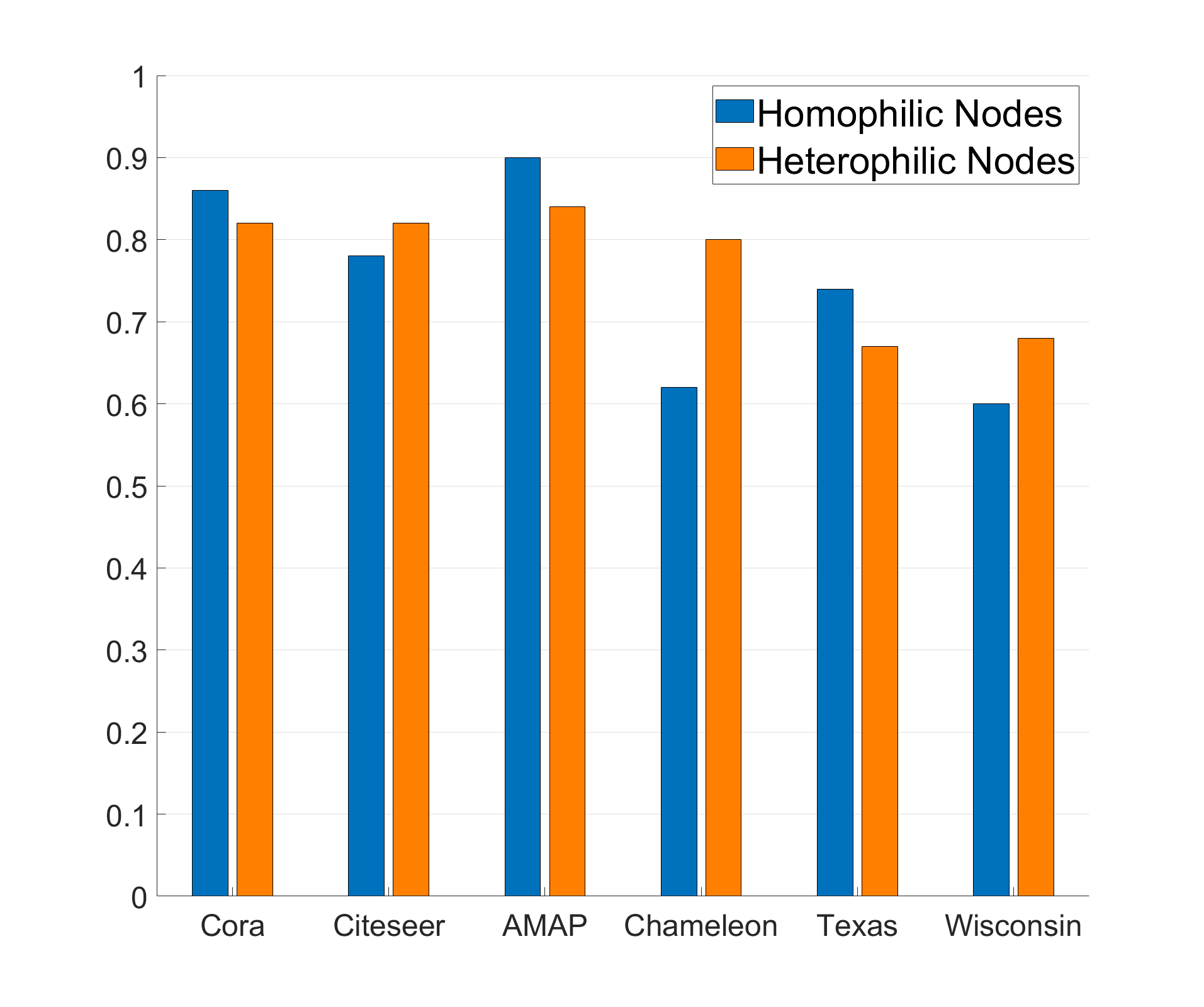}
    \caption{An interesting observation: most homophilic and heterophilic node pairs can be correctly identified by neighbor information.}
    \label{sthf1}
\end{figure}
To address the \textcolor{black}{aforementioned limitations}, we first investigate \textcolor{black}{neighbor commonality} in homophilic and heterophilic graphs through empirical experiments. Our intuition is inspired by \textcolor{black}{Balance Theory} \cite{cartwright1956structural,EvenNet}, which states: ``My enemy's enemy is my friend, and my friend's friend is also my friend." Thus, in heterophilic graphs, if two nodes share many common ``enemies," they are \textcolor{black}{highly likely to belong to the same cluster}; in homophilic graphs, nodes sharing many common ``friends" are also \textcolor{black}{likely to belong to the same cluster}. We conduct \textcolor{black}{empirical experiments} to verify this observation. Let two nodes be $v_i$ and $v_j$, with neighbors $\mathcal{N}_i$ and $\mathcal{N}_j$, respectively. We calculate the proportion of common neighbors as \textcolor{black}{$\frac{\mathcal{N}_i \cap \mathcal{N}_j}{\mathcal{N}_i \cup \mathcal{N}_j}$}. If $\frac{\mathcal{N}_i \cap \mathcal{N}_j}{\mathcal{N}_i \cup \mathcal{N}_j} \geq 0.5$, the nodes are classified as \textcolor{black}{homophilic pairs}; otherwise, they are treated as \textcolor{black}{heterophilic pairs}, as these nodes lack sufficient common ``friends" or ``enemies". We then compute the proportions of pairs correctly classified in six real homophilic and heterophilic datasets. Fig. \ref{sthf1} shows that most node pairs can be \textcolor{black}{accurately distinguished} using neighbor information. This \textcolor{black}{notable finding} motivates us to fully exploit neighbor information in real graph processing.

Based on this commonality, we first propose a simple method to construct two graphs: one highly homophilic and the other highly heterophilic. We then design a novel filter that accounts for their differing neighborhood sizes, which is \textcolor{black}{theoretically proven to improve clustering performance}. In addition, a squeeze-and-excitation block is employed to \textcolor{black}{enhance the most essential features}. Experiments on \textcolor{black}{graph tasks and co-saliency detection} demonstrate the versatility of our approach. \textcolor{black}{Different from existing graph clustering methods, our proposed graph filter is theoretically grounded through an analysis of clustering discriminability, scalable due to a hashing-based design, and further validated beyond clustering in a visual co-saliency detection task.} Our main contributions are summarized as follows.

\begin{itemize}
    \item{We identify the \textcolor{black}{neighbor commonality} in homophilic and heterophilic graphs, which provides an \textcolor{black}{unsupervised method for detecting heterophilic edges}. Based on this insight, we develop two unsupervised strategies for graph restructuring to capture both homophilic and heterophilic information from \textcolor{black}{arbitrary graphs}.}
    \item{We propose a novel filter for \textcolor{black}{real-world graph processing} and provide \textcolor{black}{theoretical evidence demonstrating its advantages}. To the best of our knowledge, we provide the first theoretical analysis \textcolor{black}{establishing the connection between graph filters and clustering performance}.}
    \item{We make the first attempt to apply squeeze-and-excitation idea in graph clustering to boost essential features after aggregation.}
    \item{Our method is verified through extensive experiments on 14 datasets, with an average accuracy improvement of 1.82\% on heterophilic graphs and 0.83\% on homophilic graphs.}
\end{itemize}

\section{Related work}
\subsection{Graph Clustering}
Attributed graph clustering methods can be broadly classified into \textcolor{black}{two categories}. The first category comprises \textcolor{black}{GNN-based methods}. These methods learn representations by employing a message-passing mechanism based on the \textcolor{black}{graph topology} \cite{GCN}. DAEGC \cite{DAEGC} is a goal-directed framework that combines an attention-based graph autoencoder with \textcolor{black}{deep latent representation learning}. MSGA \cite{MSGA} introduces a multiscale self-expression module to obtain more \textcolor{black}{discriminative coefficient representations} from each encoder layer, along with a self-supervised module to guide the learning process. SSGC \cite{SSGC} employs a modified Markov diffusion kernel to capture both global and local contexts of each node, enabling aggregation in large neighborhoods while \textcolor{black}{mitigating severe over-smoothing}. The concept of contrastive learning is also widely adopted in graph clustering. CCGC \cite{CCGC} addresses the semantic drift issue by using siamese encoders with unshared parameters and leveraging high-confidence cluster information to carefully select positive samples. SCGC \cite{SCGC} utilizes parameter unshared siamese encoders along with direct perturbation of node embeddings using Gaussian noise. The second kind is the shallow method without using neural networks. \cite{lin2023multi} acquires smooth embeddings by using low-pass filters. FGC \cite{FGC} and MCGC \cite{MCGC} adopt the low-pass filter and learn the nearest neighbor information, respectively, for the new graph. \textcolor{black}{FPGC \cite{xie2025one} proposes to learn a model for each node. THESAURUS \cite{deng2025thesaurus} leverages semantic prototypes to provide contextual information.} However, these methods focus only on homophilic graphs and ignore heterophily, which is limited, since real-world graphs always contain heterophilic edges.

\subsection{Learning on Heterophilic Graphs}
Heterophilic graphs present significant challenges for many GNN-based methods, \textcolor{black}{resulting in performance degradation}. The heterophilic problem has been extensively investigated in node classification tasks. Several methods have been proposed to \textcolor{black}{expand the receptive field to capture homophilic neighbors}. MixHop \cite{Mixhop} addresses heterophily by iteratively blending feature representations of neighbors at different distances. GloGNN \cite{Global-hete} constructs a graph incorporating high-order neighbors to distinguish homophilic neighbors \textcolor{black}{within} the graph structure. In addition, many methods have been proposed to \textcolor{black}{modify the message-passing architecture for heterophilic graphs}. ACM-GCN \cite{ACM-GCN} adaptively employs aggregation, diversification, and identity channels in each layer to \textcolor{black}{mitigate harmful heterophily and enhance GNN performance}. LINKX \cite{LINKX} separates graph and feature representations. Flow2GNN \cite{huang2024flow2gnn} employs a bidirectional message propagation mechanism that \textcolor{black}{decouples and redistributes heterophilic signals across both structural and attribute domains. PCConv\cite{li2024pc} combines both homophilic and heterophilic graph filter through Possion-Charlier polynomials.}

Although these approaches mitigate the heterophilic problem to some extent, they heavily depend on prior knowledge, such as labels, which are not accessible in unsupervised tasks. To our knowledge, SELENE \cite{SELENE}, CGC \cite{CGC}, and DGCN \cite{DGCN} are the only graph clustering methods that consider heterophily. SELENE \cite{SELENE} uses a dual-channel feature embedding pipeline to discriminate r-ego networks using node attributes and structural information separately. \textcolor{black}{HGDA \cite{fang2025homophily} proposes a multi-filter alignment framework that separately captures and aligns graph and attribute signals to improve cross-graph node classification.} DGCN and CGC use an adaptive filter to capture meaningful low- and high-frequency information. However, these methods are based on traditional low-pass filters and ignore global structure information. The most related work to ours is DGCN, which also constructs a homophilic and a heterophilic graph for clustering. However, it has the following drawbacks with respect to our method: 1) It needs $O(N^4)$ computational complexity to build graphs, which makes it impossible to apply on even medium-sized graphs. 2) Its homophilic graph is solely based on features without taking into account the property of the original structure. 3) Its filter fails to incorporate global structure information. 

\section{Methodology}
\textbf{Notations.} We define an undirected graph $\mathcal{G}=(\mathcal{V},E,X)$, where $\mathcal{V} = \left\{v_1, v_2,..., v_N \right\}$ represents the node set and $E$ denotes the edge set with $|E|$ edges. $X\in\mathbb{R}^{N \times d}$ is the feature matrix and $d$ is the number of channels. The matrix $A\in\mathbb{R}^{N \times N}$ signifies the adjacency matrix, where $A_{i j}= 1$ if $(v_i, v_j) \in E$; otherwise, $A_{i j}= 0$. Furthermore, the matrix $D$ corresponds to the degree matrix with $D_{i i}=\sum_j A_{i j}$. The normalized $A$ is $\tilde{A} = D^{-\frac{1}{2}}(A+I)D^{-\frac{1}{2}}$. The normalized Laplacian graph is $ L = I - \tilde{A} = U \Lambda U^{\top}$ and its eigenvalue matrix is $\Lambda=diag(\lambda_{1},\lambda_{1},\dots,\lambda_{N})$, where $0\le\lambda_{1}\le\lambda_{2}\le\dots\le\lambda_{N}\le2$. $U=\{u_{1},\dots,u_{N}\}$ indicates corresponding orthonormal eigenvectors.
 
\subsection{Graph Restructuring}
Graph clustering faces two critical challenges: real-world graphs contain a mixture of homophilic and heterophilic edges, and \textcolor{black}{the level of graph homophily is unknown a priori during clustering}. Directly using the original graph for filtering can \textcolor{black}{adversely affect downstream clustering}. Therefore, it is crucial to develop a \textcolor{black}{principled method for clustering graphs with varying levels of homophily}. In this paper, we adopt a graph restructuring approach that \textcolor{black}{separately extracts homophilic and heterophilic information}.

\subsubsection{Homophilic Graph Construction}


Based on our previous finding regarding neighbor commonality, we leverage neighborhood information to \textcolor{black}{differentiate homophilic and heterophilic edges}. Specifically, we employ \textcolor{black}{cosine similarity} to compute the distance between nodes in \textcolor{black}{both attribute and topology spaces}:

\begin{equation}
\begin{aligned}
& K_{i j}=\frac{X_{i,:}^\top \cdot X_{j,:}}{\left|\left|X_{i, :}\right|\right| \left|\left|X_{j, :}\right|\right|} \\
& B_{i j}=\frac{A_{i, :}^\top \cdot A_{j, :}}{\left|\left|A_{i, :}\right|\right| \cdot\left|\left|A_{j, :}\right|\right|} \\
&i, j \in[1,2, \cdots N],
\end{aligned}
\end{equation}
where $X_{i,:}$ is the $i$-th row of $X$. Then homophilic graph $M$ is constructed as follows:
\begin{equation}
\begin{aligned}
 &M=K \odot B,\\
&M_{ij}=\left\{
        \begin{aligned}
			1,&     & \text { if} \quad  M^2_{ij}\geq \epsilon. \\
			0,&     &\text { otherwise} .
		\end{aligned}
		\right.\\
\end{aligned}
\end{equation}
Here, $\odot$ denotes the Hadamard product, which is used to \textcolor{black}{identify common neighbors in both attribute and topology spaces}. Squaring $M_{ij}$ transforms the similarity score into a non-negative confidence measure, \textcolor{black}{amplifying strong homophilic relations and suppressing weak or noisy correlations}, thereby enabling a more stable and robust thresholding strategy. The threshold $\epsilon$ is set to 0.001 or 0.05 to \textcolor{black}{suppress noise}.

\subsubsection{Heterophilic Graph Construction}
We use the complementary graph idea to build heterophilic graph $G$ as follows:
\begin{equation}
\begin{aligned}
& \bar{K}=1 .-K \\
& \bar{M}=1 .-M \\
& G=\bar{K} \odot \bar{M},
\end{aligned}
\end{equation}
where \(\mathbf{1}\) denotes the matrix of all ones. $G$ characterizes \textcolor{black}{nodes with similar attributes that are distant in the topology space}. We use $M$ instead of $A$ because the neighbors in $M$ are more likely to be \textcolor{black}{homophilic}. $G$ can be dense, so we retain only the top five edges for each node, corresponding to the \textcolor{black}{five most distant nodes}.

\begin{figure*}[t]
    \centering
    \includegraphics[width=1.\linewidth]{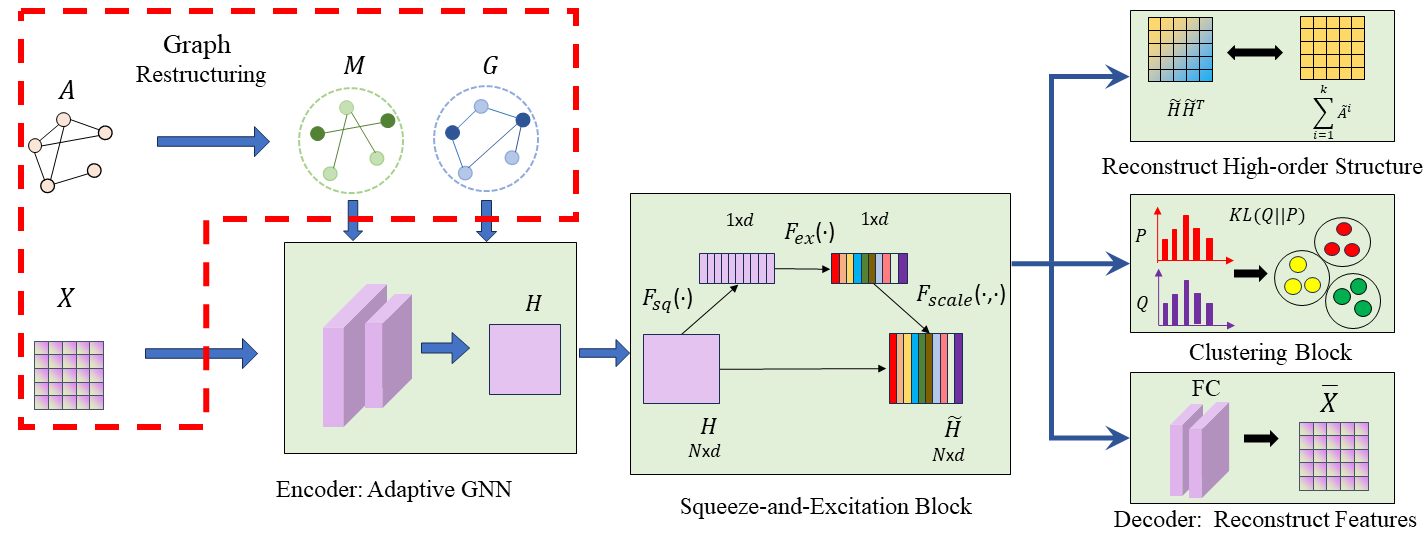}
    \caption{The overall architecture of our proposed method. The process begins with Graph Restructuring to construct homophilic ($M$) and heterophilic ($G$) graphs. These are processed by an Adaptive GNN encoder and enhanced via a Squeeze-and-Excitation (SE) block. Finally, the model minimizes a joint objective function involving high-order structure reconstruction, $KL$ divergence, and feature reconstruction.}
    \label{sth02}
\end{figure*}

\subsection{Clustering Framework}
Our overall framework for the graph clustering task is illustrated in Fig. \ref{sth02}. It first encodes node features using a novel filter applied to our constructed graphs $G$ and $M$. The learned representation is then enhanced via a squeeze-and-excitation block, consisting of two steps: \textcolor{black}{squeeze and excitation operations}. A decoder is applied to reconstruct the original features $X$. Finally, a clustering block is applied to \textcolor{black}{further enhance clustering performance}.

\subsubsection{Graph Filtering}
Previous research finds that low-frequency filters have a positive
correlation with homophily, while high-frequency filters have a negative correlation \cite{CGC,fang2022structure}. Treating the graphs $M$ and $G$ differently, we perform homophilic aggregation and heterophilic aggregation, respectively.

\textbf{Homophilic aggregation: } Traditional GNNs \cite{GCN} only aggregate local information, which loses information. Thus, we consider the global filter: $F = \exp (g(M))$, where $g(\cdot)$ is a linear filter that can capture low- or high-frequency information. To capture low-frequency information, we adopt GCN's filter kernel, i.e., for graph $M$, we have $g(M) = I-\tilde{L}_M = \tilde{M}$, where $\tilde{L}_M=U_1diag\{ \lambda_1^{(1)}, \lambda_2^{(1)},..., \lambda_N^{(1)} \}U_1^{\top}$ is Laplacian matrix of $\tilde{M}$ and $\tilde{M}$ is the normalized $M$. Then the global GNN becomes:
\begin{equation}
\begin{aligned}
F = \exp (\tilde{M}).
\end{aligned}
\end{equation}
The reason for using this global low-pass filter can be explained by the Taylor expansion as follows.

\begin{equation}
\exp (\tilde{M})=\sum_{n \geq 0} \frac{\tilde{M}^n}{n !}.
\end{equation}

The matrix can capture global information since $\exp (\tilde{M})_{i j} \neq 0$ \textcolor{black}{whenever a path with distinct hops connects nodes $v_i$ and $v_j$}. A high value of $\tilde{M}_{ij}$ indicates that over $n$ hops, nodes $v_i$ and $v_j$ exhibit \textcolor{black}{strong similarity}. The factorial division ensures the convergence of the infinite sum by preserving \textcolor{black}{mathematical stability through factors that decay rapidly to zero}. Consequently, the \textcolor{black}{significant weights decrease substantially with increasing hop distance}. This behavior is consistent with the characteristics of homophilic graphs, as the probability of encountering homophilic nodes decreases with increasing hop distance \cite{Global-hete}.

\textbf{Heterophilic aggregation: }To capture high-frequency information, we use traditional local GNN \cite{hp1}:
\begin{equation}
F = U_2 \operatorname{diag}\left(\left\{\frac{\lambda_i}{\lambda_{N}}\right\}_{i=1}^N\right) U_2^{\top} = \frac{1}{\lambda_{N}}\tilde{L}_G = \frac{2}{3}\tilde{L}_G,
\end{equation}
where $\tilde{L}_G=U_2diag\{ \lambda_1^{(2)}, \lambda_2^{(2)},..., \lambda_N^{(2)} \}U_2^{\top}$ is restructured graph $G$'s normalized Laplacian matrix. $\frac{1}{\lambda_{N}}$ is always set to $\frac{2}{3}$ according to previous work \cite{hp1}.

\textbf{Adaptive GNN: } Both homophilic and heterophilic neighboring nodes exist in \textcolor{black}{their respective graph types}. Consequently, we combine the aforementioned low- and high-pass filters and \textcolor{black}{define the aggregation operation at the $l$-th layer as follows}:

\begin{equation}
\begin{aligned}
H^{(l)} & = (1-\mu) U_1 \operatorname{diag}\left(\left\{e^{1-\lambda_i^{(1)}}\right\}_{i=1}^N\right) U_1^{\top} H^{(l-1)} W^{(l-1)}\\
& +\mu \left(\frac{2}{3}\tilde{L}_G\right) H^{(l-1)} W^{(l-1)}.
\end{aligned}
\end{equation}
The global filter is Min-Max normalized to \textcolor{black}{match} the range of the local filter, and $\mu$ is a trade-off parameter that balances low- and high-frequency information. $H^{(0)}$ is the original \textcolor{black}{features} $X$. With this aggregation, our model can adaptively \textcolor{black}{aggregate} low- and high-frequency information in each layer. The reason why we use global aggregation on the homophilic graph while using local aggregation on the heterophilic graph is analyzed in \textcolor{black}{Theorem \ref{pro}}.

\subsubsection{Squeeze-and-excitation Block}
After encoding, we \textcolor{black}{feed} the obtained node representation $H$ into the squeeze-and-excitation (SE) block, which is an attention mechanism based on attribute dimensions. \textcolor{black}{Important} node features will be improved. It consists of two major steps: \textcolor{black}{squeeze and excitation}.

\textbf{Squeeze:} To consider global features in latent space, the squeeze operation compresses the node representation $N \times d$ into $1 \times d$. Specifically, squeeze operation is global pooling as follows:
\begin{equation}
s=F_{s q}\left(H\right)=\frac{1}{N} \sum_{i=1}^{N} H_{i, :},
\end{equation}
where $s\in\mathbb{R}^{1 \times d}$, which is considered to be a channel-wise statistic that squeezes global feature information.

\textbf{Excitation:} Excitation follows the squeeze operation and aims to capture \textcolor{black}{the} dependencies between channels. Excitation is a simple gating mechanism, \textcolor{black}{enabling} flexibility in learning non-linear interactions between channels. It uses the sigmoid function $\sigma$ to excite the squeezed feature map. First, the dimension \textcolor{black}{is reduced} through the fully connected layer. Then, we use a ReLU function $\delta\left(\right)$ and a fully connected layer with increasing dimensionality to \textcolor{black}{restore} dimensions, that is, $W_2 \delta\left(W_1 s\right)$. Finally, the sigmoid function is applied \textcolor{black}{to the output}. The operation is summarized as follows:

\begin{equation}
\begin{aligned}
\tilde{s} & =F_{e x}\left(s, W\right) =\sigma\left(W_2 \delta\left(W_1 s\right)\right).
\end{aligned}
\end{equation}

\textbf{Reweight:} The result of the excitation operation is considered \textcolor{black}{as} the importance of each attribute due to the selection of the attribute dimension. Next, we \textcolor{black}{recalibrate} the representations by multiplying each attribute dimension by the initial representations $H$. The specific \textcolor{black}{re-weighting} operation is defined as follows:

 \begin{equation}
\tilde{H}=F_{ scale }\left(H, \tilde{s}\right)=\tilde{s} H,
\end{equation}
where $\tilde{H}$ is the output of SE block.

\subsubsection{Clustering Module}
Existing works often reconstruct the original topology structure \cite{DAEGC}, which \textcolor{black}{forces} neighboring nodes to have similar representations. However, this is based on the \textcolor{black}{assumption of homophily}. Reconstructing a heterophilic graph would \textcolor{black}{bring} highly dissimilar nodes close. As noted by \cite{Global-hete}, homophilic nodes exist among \textcolor{black}{multi-hop} neighbors. Thus, we propose reconstructing the \textcolor{black}{higher-order} topology structure as follows.

\begin{equation}
\mathcal{L}_{HS}=\frac{1}{N^2}\left\| \tilde{H}\tilde{H}^\top - \sum_{i=1}^{k}\tilde{A}^i\right\|_F^2,
\end{equation}
where $k$ indicates reconstructing $k$-order structure.

The decoder is applied to reconstruct the original features $X$. Some \textcolor{black}{``easy''} nodes show few feature variations during reconstruction, suggesting their trivial contribution to model training. We adopt a fully connected (FC) layer as the decoder with output $\bar{X}$ and \textcolor{black}{use} Scaled Cosine Error (SCE) \cite{GraphMAE} as the objective function:

\begin{equation}
    \mathcal{L}_{\mathrm{RE}}=\sum^N_{i=1 }\left(1-\frac{X_{i,:}^\top \bar{X}_{i,:}}{\left\|X_{i,:}\right\| \cdot\left\|\bar{X}_{i,:}\right\|}\right)^2.
\end{equation}

Finally, we utilize a clustering block for cluster enhancement. The soft assignment distribution $P$ is calculated as:
\begin{equation}
    p_{i j}=\frac{\left(1+\left\|\tilde{H}_{i,:}-c_j\right\|^2 / \beta\right)^{-\frac{\beta+1}{2}}}{\sum_{j^{\prime}}\left(1+\left\|\tilde{H}_{i,:}-c_{j^{\prime}}\right\|^2 / \beta\right)^{-\frac{\beta+1}{2}}},
\end{equation}
where cluster centers $c$ are initialized using \textcolor{black}{k-means} on the representations, and $\beta$ represents the degree of freedom in the Student’s $t$-distribution. The target distribution $Q$ is computed as:

\begin{equation}
    q_{i j}=\frac{p_{i j}^2 / \sum_i p_{i j}}{\sum_{j'}\left(p_{i j'}^2 / \sum_i p_{i j'}\right)}.
\end{equation}
	
We improve the cohesiveness of the cluster by bringing the node representation closer to the cluster centers by minimizing the KL divergence between the distributions $P$ and $Q$, which is calculated as:
	\begin{equation}
		\mathcal{L}_{CLU}=K L(Q \| P)=\sum_i \sum_j q_{i j} \log \frac{q_{i j}}{p_{i j}}
	\end{equation}
	Eventually, the objective function of our proposed Provable Filter for Graph Clustering (PFGC) method is formulated as:
	\begin{equation}
		\mathcal{L} = \mathcal{L}_{RE} + \gamma_1\mathcal{L}_{HS} + \gamma_2\mathcal{L}_{CLU},
	\end{equation}
	where $\gamma_1$ and $\gamma_2$ are trade-off parameters to balance three terms. We minimize the objective function above to train the model, obtaining the clustering label for node $i$ as:
	\begin{equation}
		z_{i} = \underset{j}{\operatorname{argmax}p_{ij}}.
	\end{equation}

\subsection{Computational Complexity}
PFGC's complexity comes mainly from graph restructuring and \textcolor{black}{the adaptive GNN}. The complexity of cosine similarity is $O(N^2)$. Adaptive GNN \textcolor{black}{requires} the eigendecomposition of the graph Laplacian. Empirically, there are two approaches to \textcolor{black}{improve} its speed. Firstly, the graph's eigendecomposition results can be stored and accessed after a one-time computation, \textcolor{black}{so} we only require a single eigendecomposition. Second, although the filtering process occurs in the spectral domain, empirically, the polynomial approximation can \textcolor{black}{approximate} this process. Consequently, the computational complexity of adaptive GNN is comparable to that of standard linear GCN \cite{GCN} and graph diffusion models \cite{grand}.

We apply the simhash technique to reduce the cost of graph restructuring. 
Let $\overrightarrow{h_i}$ be a random vector passing through the origin. The simhash function for feature $x$ is:

\begin{equation}
\operatorname{simhash}_{\overrightarrow{h_i}}(x)=\operatorname{sgn}(\langle{x}, \overrightarrow{h_i}\rangle),
\end{equation}

where $\operatorname{sgn}(\mathbf{a})= \begin{cases}1 & \text { if } \mathbf{a}>0 .\\ -1 & \text { if } \mathbf{a} \leq 0.\end{cases}$

Then $\mathbf{k}-\operatorname{simhash}$ is defined as:

\begin{equation}
\begin{aligned}
&\mathbf{k}-\operatorname{simhash}(x) \\
&=\left[\operatorname{simhash}_{\overrightarrow{h_1}}(x), \operatorname{simhash}_{\overrightarrow{h_2}}(x), \ldots, \operatorname{simhash}_{\overrightarrow{h_\mathbf{k}}}(x)\right]
\end{aligned}
\end{equation}

Simhash can approximate the Cosine similarity and the angle $\theta$ between $x_1$ and $x_2$ is defined as:

\begin{equation}
\begin{aligned}
\theta & =\left(1-P\left[\operatorname{simhash}_{\overrightarrow{h}}\left(x_1\right)=\operatorname{simhash}_{\overrightarrow{h}}\left(x_2\right)\right]\right) \times \pi \\
& \approx\left(1-\frac{\sum_{i} \textbf{1}\left[\operatorname{simhash}_{\overrightarrow{h_i}}\left(x_1\right)=\operatorname{simhash}_{\overrightarrow{h_i}}\left(x_2\right)\right]}{\mathbf{k}}\right) \times \pi,
\end{aligned}
\end{equation}
where \textbf{1} is an indicator function, and $\mathbf{k}$ is set to 8 or 16. The complexity of constructing $\mathbf{k}-\operatorname{simhash}$ is $O(\mathbf{k}dN)$ for all nodes, which is \textcolor{black}{significantly smaller} than $O(N^2)$. Note that the computational complexity of \textcolor{black}{SimHash} can be further reduced through GPU parallel processing techniques. Finally, we consider nodes that share the same $\mathbf{k}-\operatorname{simhash}$ as \textcolor{black}{homophilic edges}, while the top 5 far nodes are \textcolor{black}{treated as heterophilic edges}.

 \subsection{Theoretical Analysis of Filter Behaviors}
For graph clustering, the theoretical connection between \textcolor{black}{a filter} and clustering performance remains underexplored. To fill this gap, we theoretically analyze how to design the filters to better suit \textcolor{black}{practical graphs} with different homophily levels. The characteristics of homophilic and heterophilic graphs require \textcolor{black}{different} neighborhood sizes for filtering. We consider a local filter with a small receptive field and a global filter with a large receptive field. Assume that the graph is balanced and undirected, \textcolor{black}{with} $N$ nodes and $C$ clusters, \textcolor{black}{each containing} $N/C$ nodes. Its Laplacian matrix is $L$ with maximum eigenvalue $\lambda_N$. Consider global filters: $h_1(L) = \exp (I - L)$, $h_3(L) = \exp (L)$, and local filters that are widely applied in other methods: $h_2(L) = I - \frac{1}{\lambda_N}L$, $h_4(L) = \frac{1}{\lambda_N} L$. Global filters are Min-Max normalized to have the same range as local filters, \textcolor{black}{consistent with our proposed method}. Following \cite{Geom-GCN}, $r$ is defined as the homophily ratio: $r = \frac{1}{N}\sum\limits_{v \in\mathcal{V}}  \frac{\left|\{u \in \mathcal{N}_{v}|\ell(u)=\ell(v)\}\right|}{\left|\mathcal{N}_{v}\right|}$, where $\ell(v)$ indicates the label of node $v$. Then, we have the following theorem:

\begin{theorem} \label{pro}
Assume that low-pass and high-pass filters are applied on the graphs with $r>\frac{1}{C}$ and $r<\frac{1}{C}$, respectively. Then the clusters would be more discriminative with $h_1(L)$ compared to $h_2(L)$, while $h_4(L)$ improves the discriminativeness of the clusters more than $h_3(L)$.
\label{discriminative}
\end{theorem}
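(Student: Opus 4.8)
The plan is to reduce the geometric statement about cluster discriminativeness to a one-dimensional comparison of the four filter functions evaluated on the spectrum of $L$. First I would set up an idealized balanced graph model consistent with the theorem's hypotheses: $C$ equal-sized clusters whose expected adjacency is a two-parameter block matrix with intra- and inter-cluster connection strengths tied to the homophily ratio $r$. Diagonalizing this structured Laplacian, I would show that its spectrum collapses to three relevant locations: a trivial component at $\lambda = 0$ (the all-ones/degree mode, common to every node and hence non-discriminative), the cluster-indicator subspace concentrated at a single eigenvalue $\lambda^\ast = \frac{C(1-r)}{C-1}$ of multiplicity $C-1$ (the signal that separates clusters), and a within-cluster bulk together with a high-frequency tail up to $\lambda_N$ (the noise). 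The one computation that does the real work here is $\lambda^\ast$, together with its immediate consequence $\lambda^\ast < 1 \iff r > \tfrac1C$ and $\lambda^\ast > 1 \iff r < \tfrac1C$. Thus homophily places the discriminative signal in the low band and heterophily places it in the high band, which is exactly the dichotomy the theorem exploits.

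Next I would fix a scalar measure of discriminativeness, defined as the ratio of filtered signal energy to filtered noise energy, $D(h) = h(\lambda^\ast)^2 / \mathbb{E}_{\text{noise}}[h(\lambda)^2]$. This is the natural Fisher-type between/within separation of the rows of $h(L)X$ once the features are modeled as a cluster-aligned component at $\lambda^\ast$ plus spectrally spread noise. Writing both scatters through the spectral decomposition of $h(L)$ makes $D$ depend on the filter only through its values on the spectrum, so the theorem becomes the pair of inequalities $D(h_1) > D(h_2)$ when $\lambda^\ast < 1$ and $D(h_4) > D(h_3)$ when $\lambda^\ast > 1$.

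Then I would carry out the comparison regime by regime, using monotonicity and convexity. For the homophilic (low-pass) case I would compare the Min--Max normalized convex filter $h_1$ against the linear $h_2$ on $[0,\lambda_N]$: because the exponential is convex it attenuates the mid/high noise band more sharply than the linear chord while keeping the low-lying signal at $\lambda^\ast$ relatively intact, giving a strictly larger signal-to-noise ratio. For the heterophilic (high-pass) case the decisive point is that $\lambda^\ast$ lies strictly below the spectral maximum $\lambda_N$ (indeed $\lambda^\ast \le \tfrac{C}{C-1}$ while the noise tail reaches up to $\lambda_N$), so the rapidly growing normalized exponential $h_3$ places disproportionate mass on the high-frequency noise above $\lambda^\ast$, whereas the linear $h_4$ grows only proportionally; a direct comparison of the two ratios then yields $D(h_4) > D(h_3)$. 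Combining the two inequalities gives the theorem.

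The main obstacle I anticipate is not the algebra of the filter comparison but pinning down the discriminativeness functional and the noise spectral density so the comparison is honest across the entire spectrum rather than at a single convenient frequency. The two global filters are fixed only up to the affine Min--Max normalization, which pins their endpoint values and can make naive pointwise ratios degenerate (a low-pass filter vanishes at $\lambda_N$, a high-pass at $0$). I would therefore argue that the relevant noise mass sits in the interior/high band for the low-pass comparison while the tail near $\lambda_N$ dominates the high-pass comparison, and check that the convexity inequalities survive normalization. A related, genuinely load-bearing point is that the heterophilic claim requires $\lambda_N > \lambda^\ast$, i.e. real high-frequency noise above the signal; I would handle this by stating the balanced-graph assumption explicitly and treating the bulk and tail as a perturbation that does not move $\lambda^\ast$ across the threshold $1$.
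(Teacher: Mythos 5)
Your reduction to a spectral comparison is cleanly set up, and the homophilic half of it does go through: your computation of the block-model signal eigenvalue $\lambda^{\ast}=\frac{C(1-r)}{C-1}$ is correct, and with $\lambda^{\ast}<1$ below the noise bulk (which sits near $1$ and above), the fact that $h_1$ is convex with chord $h_2$ makes the ratio $h_1(\lambda)/h_2(\lambda)$ non-increasing, so the low-lying signal is attenuated relatively less than the noise and your ratio satisfies $D(h_1)>D(h_2)$. The gap is in the heterophilic half, and it is fatal to this route: there your own model places the noise bulk near $\lambda=1$, which is \emph{below} the signal at $\lambda^{\ast}=\frac{C(1-r)}{C-1}>1$. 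Since $h_3$ is convex with chord $h_4$ and both vanish at $0$, the ratio $h_3(\lambda)/h_4(\lambda)$ is non-decreasing; hence for noise at $\lambda<\lambda^{\ast}$ one gets $h_3(\lambda)^2/h_4(\lambda)^2 < h_3(\lambda^{\ast})^2/h_4(\lambda^{\ast})^2$, which gives $\mathbb{E}_{\mathrm{noise}}[h_3^2] < \frac{h_3(\lambda^{\ast})^2}{h_4(\lambda^{\ast})^2}\,\mathbb{E}_{\mathrm{noise}}[h_4^2]$, i.e. $D(h_3)>D(h_4)$ --- the exact opposite of the claimed inequality. Your hedge about needing ``real high-frequency noise above the signal'' is not a perturbation issue about $\lambda^{\ast}$ crossing $1$; it is the load-bearing hypothesis, and it contradicts the model you built: the $N-C$ non-signal eigenvalues of the expected Laplacian sit exactly at $1$ and randomness spreads them only $O(1/\sqrt{d})$ around that point, while $\lambda^{\ast}\approx C/(C-1)$ under strong heterophily, so most (often essentially all) noise mass lies below the signal. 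Under your functional and your model, the theorem's second claim is therefore refuted, not proved.

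The deeper issue is that your discriminativeness functional cannot express the mechanism that actually drives the result in the paper. The paper never localizes the cluster signal in the spectrum: it takes an arbitrary signal with i.i.d.\ spectral coefficients, shows that \emph{both} exponential filters yield smaller expected edge-wise distances than their linear counterparts ($\mathbb{E}[\Delta d]<0$ and $\mathbb{E}[\Delta d']<0$, via a sign analysis of $h_1^2-h_2^2$ that is essentially your chord-above-convex fact), and then converts total edge distance into average inter-cluster minus intra-cluster distance by splitting the edge set according to the homophily ratio and counting pairs, producing the factor $\mathbb{E}[\Delta d]\,(1-Cr)$. The entire theorem is then the sign flip of $(1-Cr)$: extra smoothing across edges helps separation when edges are mostly intra-cluster ($r>\frac{1}{C}$) and hurts when they are mostly inter-cluster ($r<\frac{1}{C}$), which is precisely why the linear $h_4$ beats the exponential $h_3$ on heterophilic graphs. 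A signal-to-noise ratio evaluated at a fixed signal frequency contains no term expressing ``smoothing across mostly inter-cluster edges collapses inter-cluster distances,'' so no noise placement consistent with your block model will recover the second half of the statement. To repair the proposal you would need to replace $D$ with a pairwise-distance-based measure split into intra- and inter-cluster parts, at which point you are essentially redoing the paper's argument.
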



\begin{proof} 
$x$ is the original graph signal. $\bar{x}^{(i)}$ is the filtered representation by $h_i(L)$ (i=1,2,3,4). Let $S_{in}^{(i)}$ and ${S}_{out}^{(i)}$ be the total intra-cluster and inter-cluster distance; $\bar{S}_{in}^{(i)}$ and $\bar{S}_{out}^{(i)}$ be the average intra-cluster and inter-cluster distance of $x^{(i)}$. $\mathbb{E}\left[S^{(i)}\right] = \mathbb{E}\left[S_{out}^{(i)} - S_{i n}^{(i)}\right]$ and $\mathbb{E}\left[\bar{S}^{(i)}\right] = \mathbb{E}\left[\bar{S}_{out}^{(i)} - \bar{S}_{i n}^{(i)}\right]$. We can represent signal $x$ as the linear combination of the eigenvectors:
\begin{equation}
    x=\sum_{t=1}^N a_t u_t,
    \label{12}
\end{equation}
where $a_t = u_t^\top x$ is the coefficient. Then the filtered signal can be calculated as:
\begin{equation}
    \begin{aligned}
    \bar{x}&=h(L) x=U h(\Lambda) U^{\top} x \\
    & =\left(\sum_{t=1}^N h\left(\lambda_t\right) u_t u_t^{\top}\right)\left(\sum_{t=1}^N a_t u_t\right)=\sum_{t=1}^N h \left( \lambda_t \right) a_t u_t.\\
    \end{aligned}
    \label{13}
\end{equation}
Filtered signals with $h_1(L)$, $h_2(L)$, $h_3(L)$, $h_4(L)$ are $\bar{x}^{(1)}$, $\bar{x}^{(2)}$, $\bar{x}^{(3)}$, $\bar{x}^{(4)}$, respectively. The smoothness of the neighboring nodes can be calculated as:
\begin{equation}
\begin{gathered}
\sum_{i, j \in E}\left(x_i-x_j\right)^2=x^{\top} L x \\
\sum_{i, j \in E}\left(\bar{x}_i-\bar{x}_j\right)^2=x^{\top} h(L) x \\
\end{gathered}
\label{15}
\end{equation}
The eigenvector's smoothness score and its corresponding eigenvalue are equivalent:
\begin{equation}
    \lambda_t=u_t^{\top} L u_t=\sum_{i, j \in E}\left(u_{t, i}-u_{t, j}\right)^2\\
\end{equation}

We have:
\begin{equation}
\begin{aligned}
    &\sum_{(i, j) \in E}\left(\bar{x}_i-\bar{x}_j\right)^2 \\
    &=\left(\sum_{t=1}^N h\left(\lambda_t\right) a_t u_t^{\top}\right)\left(\sum_{t=1}^N \lambda_t u_t u_t^{\top}\right)\left(\sum_{t=1}^N h\left(\lambda_t\right) a_t u_t\right) 
    \\
    & =\sum_{t=1}^N a_t^2 \lambda_t h^2\left(\lambda_t\right) \\
\end{aligned}
\end{equation}
Since $x$ is an arbitrary unit graph signal, all $a_t$'s are independently identically distributed (i.i.d.). For i.i.d. random variables $a_i$ and $a_j$ ($j>i$), we have:
\begin{equation}
\begin{aligned}
    & \mathbb{E}\left(a_i^2\right)=\mathbb{E}\left(a_j^2\right) \\
    & \Rightarrow \lambda_i \mathbb{E}\left[a_i^2\right]=\mathbb{E}\left[\lambda_i a_i^2\right] \leqslant \mathbb{E}\left[\lambda_j a_j^2\right]=\lambda_j \mathbb{E}\left[a_j^2\right]
\end{aligned}
\end{equation}
Then, we can calculate the total distance between neighboring nodes after filtering. $\Delta d$ is the distance between the global and local filters. Specifically, the distance between $x^{(1)}$ and $x^{(2)}$ can be calculated as:

\begin{equation}
\begin{aligned}
    \mathbb{E}[\Delta d] &=\mathbb{E}\left[\sum_{i, j \in E}\left(\bar{x}^{(1)}_i-\bar{x}^{(1)}_{j}\right)^2\right]-\mathbb{E}\left[\sum_{i, j \in E}\left(\bar{x}^{(2)}_i-\bar{x}^{(2)}_j\right)^2\right] \\
& =\mathbb{E}\left[\sum_{t=1}^N a_t^2 \lambda_t h_1^2\left(\lambda_t\right)\right]-\mathbb{E}\left[\sum_{t=1}^N a_t^2 \lambda_t h_2^2\left(\lambda_t\right)\right] \\
& =\sum_{t=1}^N\left\{\left[h_1^2\left(\lambda_t\right)-h_2^2\left(\lambda_t\right)\right] \lambda_t \mathbb{E}\left[a_t^2\right]\right\} \\
& =\sum_{t=1}^N\left\{\left[\left(\frac{e^{1-\lambda_t}-e^{1-\lambda_{N}}}{e^{1-\lambda_{1}}-e^{1-\lambda_{N}}}\right)^2-\left(1-\frac{\lambda_t}{\lambda_{N}} \right)^2\right] \lambda_t \mathbb{E}\left(a_t^2\right)\right\} \\
\end{aligned}
\end{equation}
Note that our global filter is Min-Max normalized to ensure it has the same range as local filter. Define a general function $g_0(\lambda)$ regarding $\lambda$:
\begin{equation}
\begin{aligned}
&g_0(\lambda)=\left(\frac{e^{1-\lambda}-e^{1-\lambda_{N}}}{e^{1-\lambda_{1}}-e^{1-\lambda_{N }}}\right)^2-\left(1-\frac{\lambda}{\lambda_{N}} \right)^2 \\
& =\left(\frac{e^{1-\lambda}-e^{1-\lambda_{N}}}{e^{1-\lambda_{1}}-e^{1-\lambda_{N}}}+1-\frac{\lambda_t}{\lambda_{N}} \right)\left(\frac{e^{1-\lambda}-e^{1-\lambda_{N}}}{e^{1-\lambda_{1}}-e^{1-\lambda_{N}}}-1+\frac{\lambda}{\lambda_{N}} \right) \\
&\lambda \in \left\{\lambda_1,\lambda_2,\cdots,\lambda_N\right\}
\label{26}
\end{aligned}
\end{equation}
Define $g_1(\lambda)=\frac{e^{1-\lambda}-e^{1-\lambda_{N}}}{e^{1-\lambda_1}-e^{1-\lambda_N}}+1-\frac{1}{\lambda_N} \lambda$ and $g_2(\lambda)=\frac{e^{1-\lambda}-e^{1-\lambda_{N}}}{e^{1-\lambda_{1}}-e^{1-\lambda_{N}}}-1+\frac{1}{\lambda_N} \lambda$. Obviously, $g_1(\lambda)$ is monotonically decreasing with respect to $\lambda$, so $g_1(\lambda)\geq g_1\left(\lambda_{N}\right)=0$. Therefore, we only need to focus on $g_2(\lambda)$ to judge whether $g_0(\lambda)$ is positive or negative. Let $g^{\prime}_2(\lambda)$=0, we have:
\begin{equation}
\begin{aligned}
\bar{\lambda}=1+\ln \left(e^{1-\lambda_{1}}-e^{1-\lambda_N}\right)+\ln \lambda_N > 0
\end{aligned}
\end{equation}
Thus $g_2(\lambda)$ is monotonically decreasing on $(0, \bar{\lambda})$ and monotonically increasing on $(\bar{\lambda}, +\infty)$. Considering $g_2(\lambda_N)=0$, $g_0(\lambda)=0$ only has one root $\lambda_0$ in $(\lambda_1, \lambda_N)$. Thus $g_0(\lambda)$$ \geq 0$ when $\lambda_t \in \left[\lambda_1, \lambda_m \right]$ and $g_0(\lambda)$$ \le 0$ when $\lambda_t \in \left[\lambda_{m+1}, \lambda_N \right]$, where $\lambda_m$ is the closest one to $\lambda_0$ in $\lambda_t \le \lambda_0$. Note that both filters are applied on the reconstructed homophilic graph, which indicates $\lambda_1$ is very close to 0. Thus, $\lambda_0$ and $g_0(\lambda)_{max}=(2-\frac{\lambda_1}{\lambda_N})\frac{\lambda_1}{\lambda_N}$ are very close to 0. Therefore, we assume that $\left\|\left[g_0\left(\lambda_1\right), \cdots ,g_0\left(\lambda_m\right)\right]^{\top}\right\|_2^2 \leq\left\|\left[g_0\left(\lambda_{m+1}\right), \ldots ,g_0\left(\lambda_N\right)\right]^{\top}\right\|_2^2$. Then we have the following:

\begin{equation}
\begin{aligned}
& 0 \leqslant \sum_{t=1}^m \left[h_1^2\left(\lambda_t\right)-h_2^2\left(\lambda_t\right)\right] \leqslant \sum_{m+1}^N \left[h_2^2\left(\lambda_t\right)-h_1^2\left(\lambda_t\right)\right] \\
\Rightarrow & 0 \leqslant \sum_{t=1}^m\left[h_1^2\left(\lambda_t\right)-h_2^2\left(\lambda_t\right)\right] \lambda_t \\
&<\lambda_m \sum_{t=1}^m\left[h_1^2\left(\lambda_t\right)-h_2^2\left(\lambda_t\right)\right] \\
& \leq \lambda_{m+1} \sum_{m+1}^N\left[h_2^2\left(\lambda_t\right)-h_1^2\left(\lambda_t\right)\right] \\
&<\sum_{m+1}^N\left[h_2^2\left(\lambda_t\right)-h_1^2\left(\lambda_t\right)\right] \lambda_t \\
\end{aligned}
\end{equation}

We can derive the following result from it:

\begin{equation}
\begin{aligned}
    & \sum_{t=1}^N\left\{\left[h_1^2\left(\lambda_t\right)-h_2^2\left(\lambda_t\right)\right] \lambda_t\right\}<0 \\
    & \Rightarrow \mathbb{E}[\Delta d] = \sum_{t=1}^N\left\{\left[h_1^2\left(\lambda_t\right)-h_2^2\left(\lambda_t\right)\right] \lambda_t \mathbb{E}\left[a_t^2\right]\right\}<0
\end{aligned}
\end{equation}
Similarly, we can prove the following conclusion with high-pass filters:
\begin{equation}
    \begin{aligned}
& \mathbb{E}\left[\Delta d^{\prime}\right] \\
&=\mathbb{E}\left[\sum_{i, j \in E}\left(\bar{x_i}^{(3)}-\bar{x_{j}}^{(3)}\right)^2\right]-\mathbb{E}\left[\sum_{i, j \in E}\left(\bar{x_i}^{(4)}-\bar{x_j}^{(4)}\right)^2\right] \\
&<0
\end{aligned}
\end{equation}
Define $z_i$ as the label of node $i$. We can compute the total intra-cluster and inter-cluster distance as follows:
\begin{equation}
\begin{aligned}
& \mathbb{E}\left[S_{i n}^{(1)} - S_{i n}^{(2)}\right] = \mathbb{E}\left[r \Delta d\right] \\
& =\mathbb{E}\left[\sum_{\substack{i, j \in E \\ z_i=z_j}}\left[\left(\bar{x}_i^{(1)}-\bar{x}_j^{(1)}\right)^2-\left(\bar{x}_i^{(2)}-\bar{x}_j^{(2)}\right)^2\right]\right] \\
&\mathbb{E}\left[S_{out }^{(1)} - S_{out }^{(2)}\right] = \mathbb{E}\left[(1-r) \Delta d\right]\\
&=\mathbb{E}\left[\sum_{\substack{i, j \in E \\
z_i \neq z_j}}\left[\left(\bar{x}_i^{(1)}-\bar{x}_j^{(1)}\right)^2-\left(\bar{x}_i^{(2)}-\bar{x}_j^{(2)}\right)\right]\right]
\end{aligned}
\end{equation}
Finally, the average distance between inter-cluster and intra-cluster nodes can be calculated as:
\begin{equation}
\begin{aligned}
&\mathbb{E}\left[\bar{S}^{(1)}-\bar{S}^{(2)}\right] \\
& =\mathbb{E}\left[(\bar{S}_{out}^{(1)}-\bar{S}_{out}^{(2)}) - (\bar{S}_{in}^{(1)}-\bar{S}_{in}^{(2)})\right] \\
& =\frac{2 C(1-r) \mathbb{E}\left[\Delta d\right]}{N^2(C-1)} - \frac{ 2 C r \mathbb{E}\left[\Delta d\right]}{N^2}\\
& =\frac{2 C}{(C-1) N^2}\left[\mathbb{E}\left[\Delta d\right](1 - C r)\right]
\end{aligned}
\end{equation}
Similarly, after high-pass filtering, we have:
\begin{equation}
\begin{aligned}
&\mathbb{E}\left[\bar{S}^{(3)}-\bar{S}^{(4)}\right] \\
& =\frac{2 C}{(C-1) N^2}\left[\mathbb{E}\left[\Delta d^{\prime}\right](1 - C r)\right]
\end{aligned}
\end{equation}
Obviously, $\frac{2 C}{(C-1) N^2} > 0$. We reach the following conclusions.

(1) If $r>\frac{1}{C}$, then $\mathbb{E}\left[\bar{S}^{(1)}-\bar{S}^{(2)}\right] > 0$, that is, $\mathbb{E}\left[\bar{S}^{(1)}\right]>\mathbb{E}\left[\bar{S}^{(2)}\right]$

(2) If $r<\frac{1}{C}$, then $\mathbb{E}\left[\bar{S}^{(3)}- \bar{S}^{(4)}\right] < 0$, that is, $\mathbb{E}\left[\bar{S}^{(3)}\right]<\mathbb{E}\left[\bar{S}^{(4)}\right]$
\end{proof}
 
Therefore, the clusters are more discriminative after applying a global filter to a homophilic graph, whereas they are more discriminative \textcolor{black}{when using} a local filter on a heterophilic graph. Connected nodes of a homophilic graph are more likely to \textcolor{black}{belong to} the same cluster, whereas the opposite is true for a heterophilic graph. Thus, a combination of global and local filters can make the clusters more \textcolor{black}{separable} by reducing intra-cluster distance and increasing inter-cluster distance. This explains why the proposed adaptive GNN is more suitable for \textcolor{black}{processing real-world graphs}.


\section{Experiments on Clustering}
\subsection{Datasets}
To ensure \textcolor{black}{comparability}, we select benchmark datasets that are widely used in graph clustering research. For homophilic graphs, we \textcolor{black}{consider} five benchmark datasets: Cora, CiteSeer, Pubmed \cite{Cora}, Amazon Photo (AMAP) \cite{AMAP}, and USA Air-Traffic (UAT) \cite{EAT}. For heterophilic graphs, we select six datasets. Chameleon and Squirrel are page-page networks covering specific topics and sourced from Wikipedia \cite{rozemberczki2021multi}. Webgraphs from multiple computer science departments at various universities were collected by Carnegie Mellon University\footnote{http://www.cs.cmu.edu/afs/cs.cmu.edu/project/theo-11/www/wwkb/}, including Cornell, Wisconsin, and Washington. Roman-Empire is derived from the Roman Empire article on English Wikipedia and is \textcolor{black}{selected} as one of the longest articles available on the platform \cite{Roman}. Additionally, we include two large-scale graph datasets: the image relationship network Flickr \cite{Flickr} and the social network Twitch-Gamers \cite{twitch}. The homophily ratio is computed following \cite{Geom-GCN}, where higher values indicate stronger homophily. The \textcolor{black}{key statistics} of these datasets are summarized in Table \ref{tab::datasets}.

 \begin{table}[t]
		\centering
	\caption{Statistics information of datasets.}
		\label{tab::datasets}%
  \resizebox{0.45\textwidth}{!}{
    \begin{tabular}{ccccccc}
    \toprule
    \multicolumn{2}{c}{Graph datasets} & Nodes & Dims. & Edges & Clusters & Homophily Ratio \\
    \midrule
 & Cornell & 183   & 1703  & 298   & 5     & 0.1220  \\
          & Wisconsin & 251   & 1703  & 515   & 5      & 0.1703  \\
          & Washington  & 230   & 1703  & 786   & 5    & 0.1434 \\
          & Chameleon & 2277 & 2325  & 31371 & 5    & 0.2299 \\
          & Squirrel & 5201  & 2089  & 217073 & 5    & 0.2234  \\
          & Roman-empire & 22662  & 300 & 32927 & 18    & 0.0469  \\
          & Flickr &89250 &500 &899756 & 7 &0.3195 \\
    \midrule
 & Cora  & 2708  & 1433  & 5429  & 7    & 0.8137  \\
          & Citeseer & 3327  & 3703  & 4732  & 6     & 0.7392  \\
          & Pubmed & 19717 &500 &44327 &3 &0.8024 \\
          & UAT   & 1190  & 239  & 13599 & 4     & 0.6978 \\
          & AMAP  & 7650  & 745   & 119081 & 8    & 0.8272 \\
          & Twitch-Gamers &168114 &7 &67997557 &2 &0.5453 \\
          & Ogbn-Arxiv &169343 &128 &1166243 &40 &0.6778 \\
    \bottomrule
    \end{tabular}}%
		
	\end{table}

 \subsection{Comparison Methods}
 To showcase the superiority of our PFGC, we compare it against 18 baselines for performance evaluation. These 18 baselines cover five distinct categories of methods: 1) traditional GNN-based methods, such as DAEGC \cite{DAEGC}, MSGA \cite{MSGA}, SSGC \cite{SSGC}, GMM \cite{CDRS}, RWR \cite{RWR}, ARVGA \cite{ARVGA}; 2) contrastive learning-based methods, such as MVGRL \cite{MVGRL}, SDCN \cite{SDCN}, DFCN \cite{DFCN}, DCRN \cite{DCRN}, SCGC \cite{SCGC}, and CCGC \cite{CCGC}, leveraging MLPs and GNNs together to learn an aligned representation from augmented perspectives; 3) the advanced clustering approach AGE \cite{AGE}, which achieves clustering-friendly embbedings through Laplacian smoothing filters and adaptive encoders; 4) shallow methods that utilize the filter to smooth raw features and reduce noise, including MCGC \cite{MCGC}, FGC \cite{FGC}, and CGC \cite{CGC}; (5) recent clustering methods that consider heterophily, including SELENE \cite{SELENE}, DGCN \cite{DGCN}, CDC \cite{kang2024cdc}, and RGSL \cite{RGSL}.

\begin{table*}[htbp]
  \centering
  \caption{Results on heterophilic graphs. The best results are \textbf{bolded} with \textcolor[rgb]{ 1,  0,  0}{\textbf{red}} and the second-best performance is also \textbf{bolded}. "OOM" indicates out of memory. All results are reported with mean$\pm$std under ten runs}
  \label{Reheter}
  
  \resizebox{\textwidth}{!}{%
    \begin{tabular}{ccccccccccccc}
    \toprule
    \multirow{2}[4]{*}{Methods} & \multicolumn{2}{c}{Cornell} & \multicolumn{2}{c}{Wisconsin} & \multicolumn{2}{c}{Washington} & \multicolumn{2}{c}{Chameleon} & \multicolumn{2}{c}{Squirrel}  & \multicolumn{2}{c}{Roman-empire}\\
    \cmidrule{2-13}          & ACC   & NMI   & ACC   & NMI   & ACC   & NMI   & ACC   & NMI   & ACC   & NMI   & ACC   & NMI \\
    \midrule
    DAEGC & 42.56  & 12.37 & 39.62  & 12.02 & 46.96  &17.03   & 32.06  & 6.45   & 25.55  & 2.36 &21.23 &12.67\\
    MSGA  & 50.77  & 14.05 & 54.72  & 16.28 & 49.38   & 6.38   & 27.98   &6.21   & 27.42  & 4.31 &19.31 &12.25\\
    FGC   & 44.10  & 8.6   & 50.19  & 12.92 & 57.39  & 21.38   & 36.50 &11.25   & 25.11  & 1.32 &14.46 &4.86 \\
    GMM  & 58.86 & -   & 52.08 & 8.89  & 60.86  & 20.56 & -   & -   & -   & - & -   & - \\
    RWR  & 58.29 & -   & 53.96 & 16.02 & 63.91  & 23.13 & -   & -   & -   & - & -   & -\\
    ARVGA  & 56.23  & -   & 56.23  & 13.73 & 60.87  & 16.19 & -   & -   & -   & - & -   & -\\
    DCRN &51.32 &9.05 &57.74 &19.86 &55.65 &14.15 &34.52 &9.11 &30.69 &6.84 &32.57 &29.50 \\
    SELENE &57.96&17.32 &71.69&39.51 &-&- &\textbf{38.97}&\textbf{20.63} &-&- &-&-\\

    CGC & 44.62 & 14.11 & 55.85 & 23.03 & 63.20 & 25.94 & 36.43 & 11.59 & 27.23 & 2.98 & 30.16 & 27.25\\
    DGCN & \textbf{62.29$_{\pm 0.85}$} & \textbf{29.93$_{\pm 0.78}$} & \textbf{71.71$_{\pm 1.02}$} & \textbf{41.29$_{\pm 1.15}$} & \textbf{69.13$_{\pm 0.94}$} & 28.22$_{\pm 0.88}$ & 36.14$_{\pm 1.21}$ & 11.23$_{\pm 1.09}$ & \textbf{31.34$_{\pm 0.97}$} & 7.24$_{\pm 0.82}$ & OOM & OOM\\
    CDC & 51.40 & 14.20 & 63.70 & 31.80 & 67.32 & 27.92 & 36.77 & 15.05 & 27.90 & 4.30 & 31.02 & 30.46 \\
    RGSL & 57.44 & 28.95 & 56.60 & 28.57 & 66.09 & \textbf{29.79} & 38.52 & 12.79 & 30.74 & \textcolor[rgb]{ 1,  0,  0}{\textbf{8.74}} & \textcolor[rgb]{ 1,  0,  0}{\textbf{34.57}} & \textbf{31.23} \\
    PFGC &\textcolor[rgb]{ 1,  0,  0}{\textbf{66.12$_{\pm 0.75}$}}&\textcolor[rgb]{ 1,  0,  0}{\textbf{33.04$_{\pm 0.62}$}} &\textcolor[rgb]{ 1,  0,  0}{\textbf{74.10$_{\pm 0.82}$}}&\textcolor[rgb]{ 1,  0,  0}{\textbf{47.53$_{\pm 0.65}$}} &\textcolor[rgb]{ 1,  0,  0}{\textbf{70.43$_{\pm 0.77}$}} &\textcolor[rgb]{ 1,  0,  0}{\textbf{37.99$_{\pm 0.71}$}} &\textcolor[rgb]{ 1,  0,  0}{\textbf{41.28$_{\pm 0.93}$}}&\textcolor[rgb]{ 1,  0,  0}{\textbf{21.62$_{\pm 0.87}$}}&\textcolor[rgb]{ 1,  0,  0}{\textbf{33.05$_{\pm 1.01}$}}&\textbf{7.58$_{\pm 0.64}$} &\textbf{33.98$_{\pm 1.25}$}&\textcolor[rgb]{ 1,  0,  0}{\textbf{38.80$_{\pm 0.87}$}}\\
    \bottomrule
    \end{tabular}%
  }
\end{table*}

\subsection{Experimental Setting}
To ensure fairness, all experimental settings in different data sets adhere to the DGCN \cite{DGCN}, which performs a grid search to find the best results\footnote{Our code is available at https://github.com/XieXuanting/PFGC}. Our network undergoes training with the Adam optimizer for 200 epochs until convergence. According to \cite{Global-hete}, more than half of the homophilic neighbors can be included in 5 hops. Thus, the hyper-parameter $k$ is fixed to 1 on graphs with more than 10K nodes to save computing time, and empirically set to 5 on other datasets. The optimizer's learning rate is set between 1e-2 and 1e-3. $\gamma_1$ and $\gamma_2$ are chosen to balance the three types of loss in [1e-3, 1e-2, 1e-1, 1, 10]. We tune the trade-off parameter $\mu$ from 0.1 to 0.7. ACCuracy (ACC) and Normalized Mutual Information (NMI) are adopted as clustering metrics.

For two large datasets, we pre-store the reconstructed graphs with the simhash technique. $\gamma_2$ is set to 0 to improve the training efficiency (i.e., removing $\mathcal{L}_{CLU}$). We apply the mini-batch training with the size 1000 to compute the loss function. The hidden dimension is set to 100.

\subsection{Results}
The clustering results on heterophilic graphs are presented in Table \ref{Reheter}. It can be observed that PFGC achieves \textcolor{black}{consistently strong performance} across most datasets. Specifically, our method outperforms SELENE, CGC, DGCN, CDC, and RGSL, all of which explicitly account for heterophily. SELENE is a GNN-based method, whereas both DGCN and CGC employ adaptive filtering mechanisms. This demonstrates that our adaptive filter can leverage graph information more effectively than these methods. CDC and RGSL also incorporate graph reconstruction strategies, suggesting that graph reconstruction is a widely adopted approach to enhance graph quality in heterophilic settings. Consequently, the reconstructed graph in our approach is expected to be both reliable and effective. Notably, these approaches focus exclusively on local information, whereas our method also accounts for the global cluster structure. Furthermore, PFGC achieves higher ACC compared to other methods, exceeding them by up to 9\% on Cornell, 16\% on Wisconsin, and 6\% on Washington. This confirms that explicitly accounting for heterophily enhances model performance on real-world datasets. Moreover, conventional GNN-based approaches exhibit suboptimal performance on heterophilic graphs, consistent with prior observations in the literature.

The clustering results on homophilic graphs are presented in Table \ref{Rehomo}. It can be observed that PFGC exhibits competitive performance, attaining the best results in most cases and ranking second in the remaining three instances. Notably, state-of-the-art contrastive learning methods demonstrate \textcolor{black}{considerable variability} in performance. This variability arises because their performance heavily relies on data augmentation strategies, which require domain knowledge and exhibit limited flexibility across datasets. For shallow methods, FGC and MCGC employ only a low-pass filter, resulting in performance that varies considerably across datasets. CGC, a shallow method with an adaptive filter, demonstrates superior performance compared to FGC and MCGC. This indicates that relying on a single filter is insufficient for real-world graphs exhibiting diverse homophily levels. PFGC and DGCN outperform AGE in most cases, suggesting that low-pass filtering alone is inadequate, and that high-frequency information plays a critical role in homophilic graphs. PFGC consistently outperforms DGCN across all datasets. This is partly because the new graphs in DGCN are constructed without considering the intrinsic structural properties, rendering DGCN less effective in graphs with strong homophily.


\begin{table*}[!htbp]
  \centering
  \caption{Results on homophilic graphs.}
  \label{Rehomo}%
  
  \resizebox{\textwidth}{!}{%
    \begin{tabular}{ccccccccccc}
    \toprule
    \multirow{2}[4]{*}{Methods} & \multicolumn{2}{c}{Cora} & \multicolumn{2}{c}{Citeseer}  & \multicolumn{2}{c}{Pubmed} & \multicolumn{2}{c}{UAT} & \multicolumn{2}{c}{AMAP}  \\
    \cmidrule{2-11}          & ACC   & NMI   & ACC   & NMI   & ACC   & NMI   & {ACC} & NMI   & ACC   & NMI   \\
    \midrule
    DFCN  & 36.33  & 19.36 & 69.50  & 43.9  &-&-& 33.61  & 26.49  & 76.88 & 69.21   \\
    DCRN  & 48.93  & -   & 70.86  & \textcolor[rgb]{ 1,  0,  0}{\textbf{45.86}} &-&-& -  &- & {\textcolor[rgb]{ 1,  0,  0}{\textbf{79.94}}} & \textcolor[rgb]{ 1,  0,  0}{\textbf{73.70 }}    \\
    SSGC  & 69.60  & 54.71 & 69.11  & 42.87 &-&- & 36.74  & 8.04 & {60.23} & 60.37  \\
    MVGRL & 70.47  & 55.57 & 68.66  & 43.66 &-&-& 44.16  & 21.53 & {45.19} & 36.89  \\
    SDCN  & 60.24  & 50.04 & 65.96  & 38.71 &65.78&29.47& 52.25  & 21.61 & {53.44} & 44.85   \\
    AGE   & 73.50  & \textbf{57.58} & 70.39  & 44.92 &-&-& 52.37  & 23.64 & {75.98} & -       \\
    MCGC  & 42.85  & 24.11 & 64.76  & 39.11 &66.95&32.45& 41.93  & 16.64 & 71.64 & 61.54   \\
    FGC   & 72.90  & 56.12 & 69.01  & 44.02&\textbf{70.01}&31.56 & 53.03  & 27.06 & 71.04 & -        \\
    SCGC & 73.88$_{\pm 0.88}$ & 56.10$_{\pm 0.72}$ &71.02$_{\pm 0.77}$ &\textbf{45.25$_{\pm 0.45}$} &67.73$_{\pm 0.45}$&28.65$_{\pm 0.67}$&\textbf{56.58$_{\pm 1.62}$} &28.07$_{\pm 0.71}$ &77.48$_{\pm 0.37}$ &67.67$_{\pm 0.88}$   \\
    CCGC & 73.88$_{\pm 1.20}$ & 56.45$_{\pm 1.04}$ & 69.84$_{\pm 0.94}$ & 44.33$_{\pm 0.79}$ &68.06$_{\pm 0.55}$&30.92$_{\pm 0.48}$&56.34$_{\pm 1.11}$ &\textbf{28.15$_{\pm 1.92}$} &77.25$_{\pm 0.41}$ & 67.44$_{\pm 0.48}$   \\
    CGC & \textbf{75.15} & 56.90 &69.31 &43.61 &67.43&\textbf{33.07}&49.58&17.49 &73.02&63.26 \\
    DGCN & 72.19$_{\pm 1.05}$ & 56.04$_{\pm 1.21}$ & \textbf{71.27$_{\pm 0.91}$} & 44.13$_{\pm 0.85}$ & OOM & OOM & 52.27$_{\pm 0.45}$ & 23.54$_{\pm 0.38}$ & 76.07$_{\pm 1.35}$ & 66.13$_{\pm 1.92}$ \\
    PFGC & \textcolor[rgb]{ 1,  0,  0}{\textbf{76.51$_{\pm 0.86}$}} & \textcolor[rgb]{ 1,  0,  0}{\textbf{58.25$_{\pm 0.71}$}} & \textcolor[rgb]{ 1,  0,  0}{\textbf{71.90$_{\pm 0.79}$}} & \textbf{45.45$_{\pm 0.56}$} &\textcolor[rgb]{ 1,  0,  0}{\textbf{72.89$_{\pm 0.32}$}} &\textcolor[rgb]{ 1,  0,  0}{\textbf{33.30$_{\pm 0.48}$}} & \textcolor[rgb]{ 1,  0,  0}{\textbf{56.81$_{\pm 0.34}$}} &\textcolor[rgb]{ 1,  0,  0}{\textbf{29.33$_{\pm 0.42}$}} & \textbf{78.50$_{\pm 0.45}$} & \textbf{70.81$_{\pm 0.74}$}  \\
    \bottomrule
    \end{tabular}%
  }
\end{table*}

\begin{figure}[t]
    \centering
    \includegraphics[width=.47\linewidth]{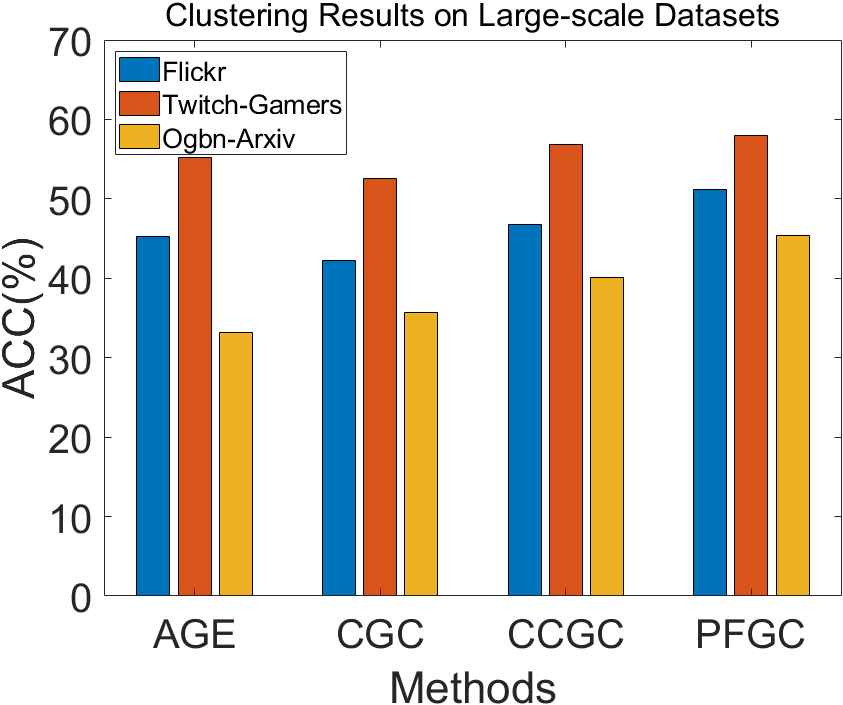}
    \includegraphics[width=.47\linewidth]{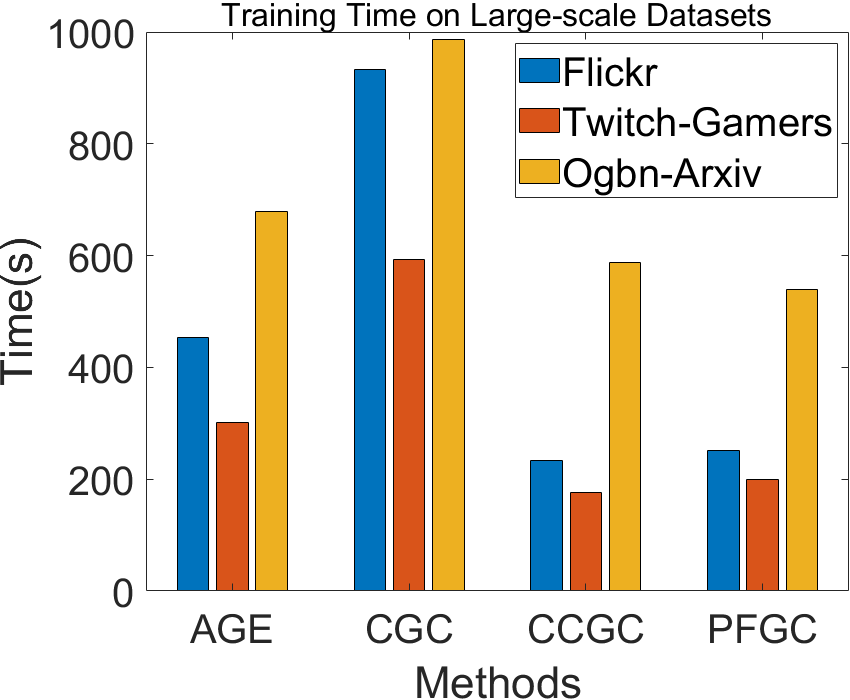}    
    \caption{Clustering results and running time on Flickr, Twitch-Gamers, and Ogbn-Arxiv.}
    \label{fig45}
\end{figure}

The \textcolor{black}{accuracy and training-time performance} on Flickr, Twitch-Gamers, and Ogbn-Arxiv are \textcolor{black}{presented} in Fig \ref{fig45}. It can be \textcolor{black}{observed that PFGC consistently outperforms all baseline methods} across the datasets. The training time of PFGC is comparable to that of CCGC, yet \textcolor{black}{substantially lower than that required by CGC and AGE}. These findings not only \textcolor{black}{demonstrate the efficacy of PFGC} but also \textcolor{black}{emphasize its scalability and computational efficiency for large-scale graphs}.

In summary, PFGC achieves stable and promising performance in both homophilic and heterophilic cases. This is primarily due to the ability of the constructed graph to capture the homophilic and heterophilic information inherent in the graphs. Therefore, PFGC is suitable for clustering practical graphs, even when the homophily ratios are unknown.
\begin{figure}[t]
		\centering
		\subfigure[Cora with restructured graph]{
			\includegraphics[width=0.46\linewidth]{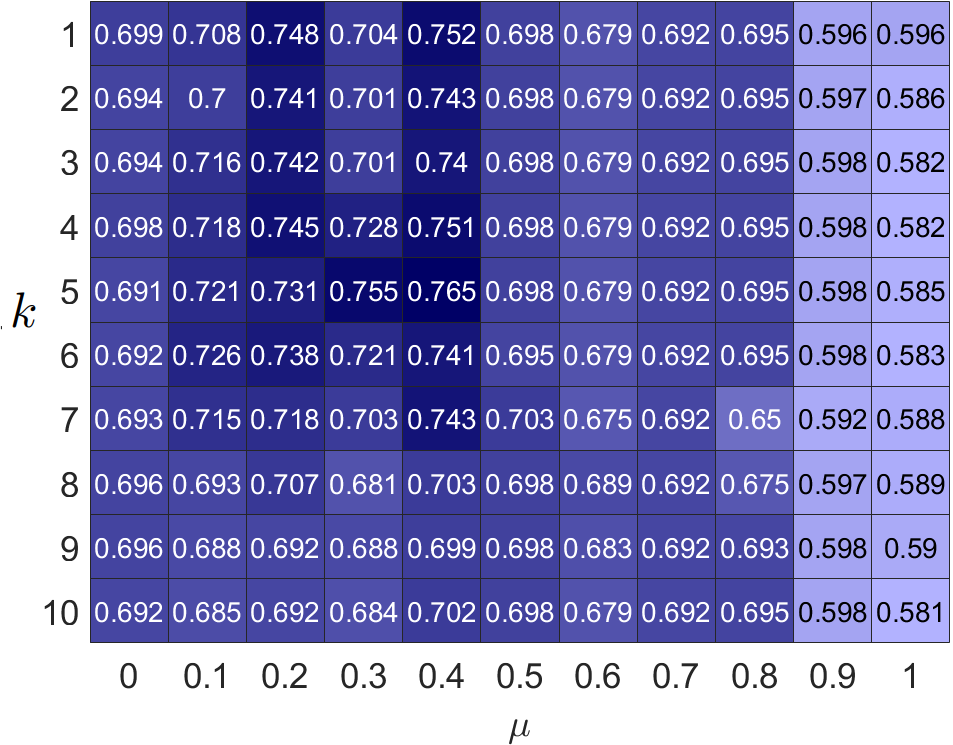}
		}
		\subfigure[Cora with raw graph $A$]{
			\includegraphics[width=0.46\linewidth]{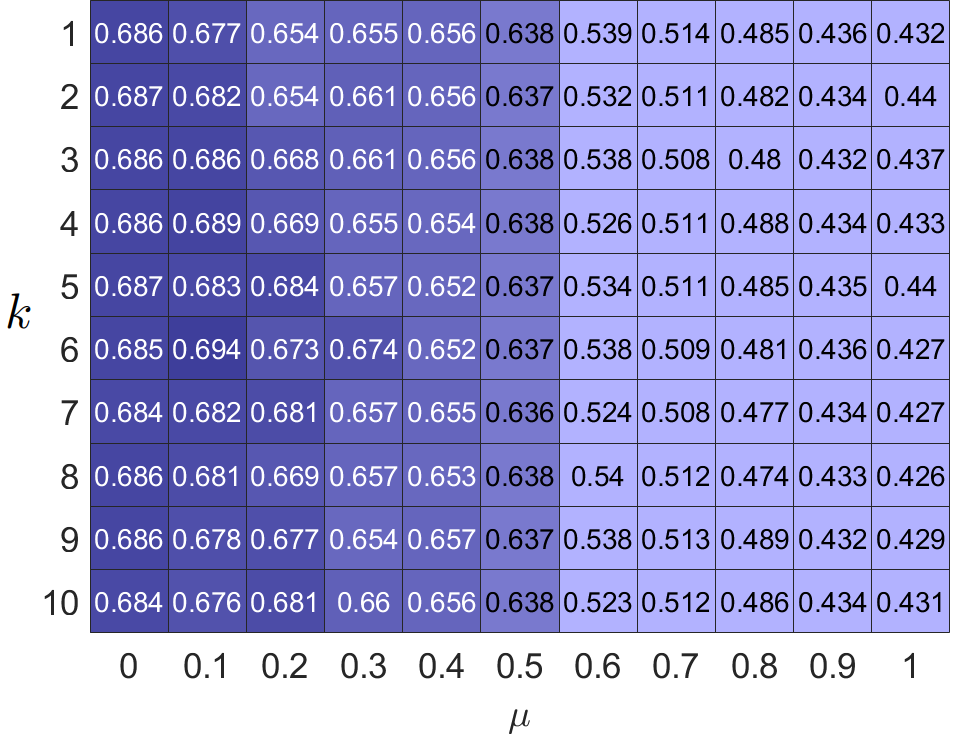}
		}
		\subfigure[Washington with restructured graph]{
			\includegraphics[width=0.46\linewidth]{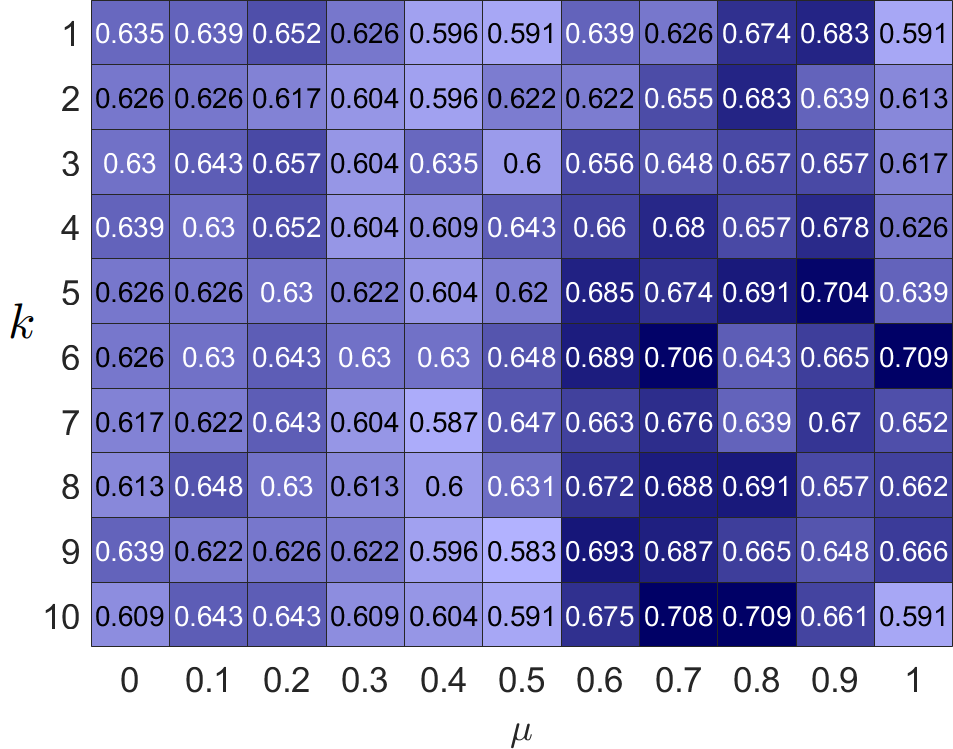}
		}
		\subfigure[Washington with raw graph $A$]{
			\includegraphics[width=0.46\linewidth]{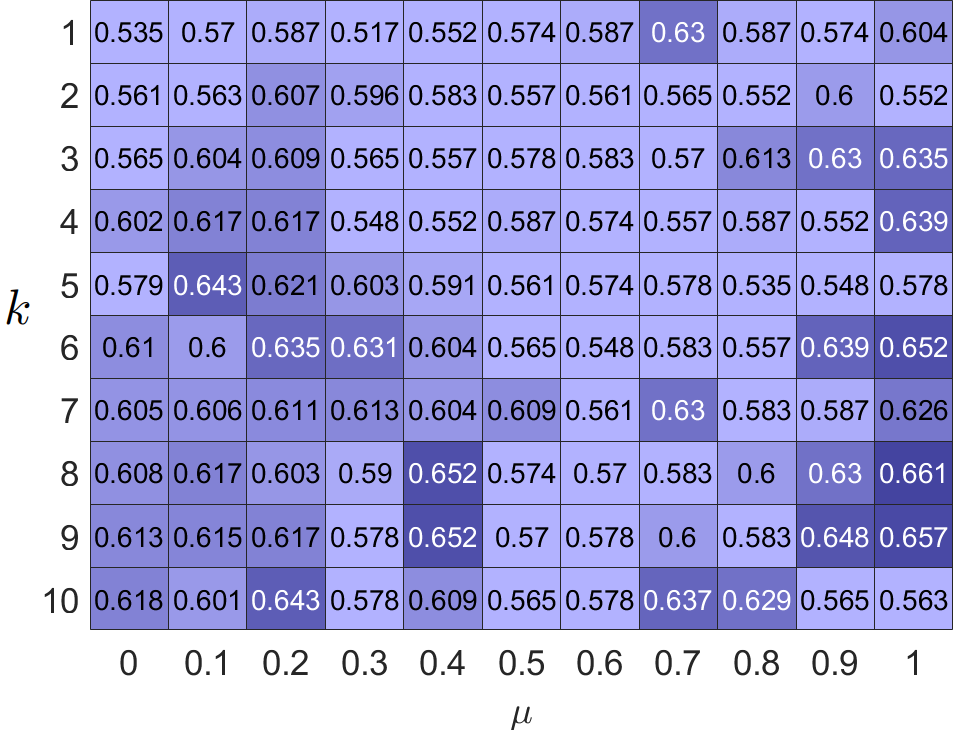}
		}
		\caption{The effect of $k$ and $\mu$ on Cora and Washington with restructured graphs and raw graphs.}
		\label{pa14}
\end{figure}
\begin{figure}[t]
		\centering

		\includegraphics[width=1.\linewidth]{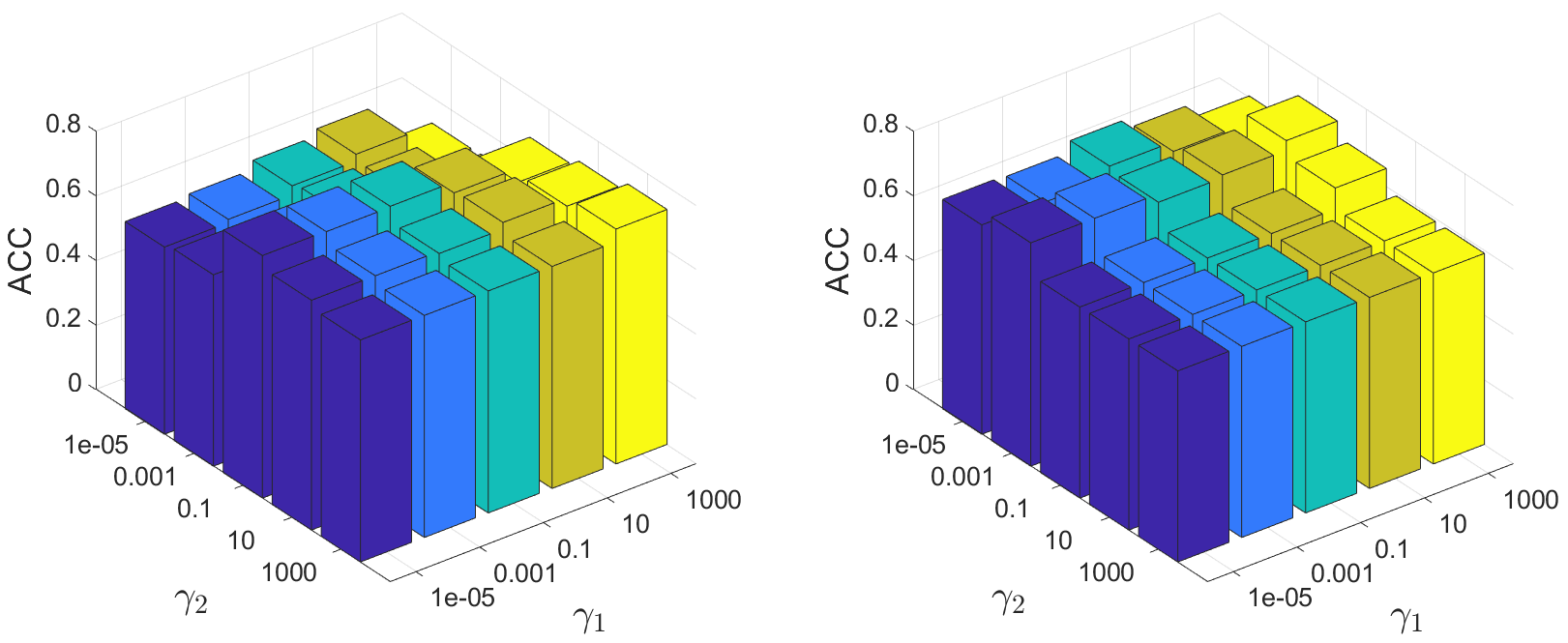}

		\caption{The effect of $\gamma_1$ and $\gamma_2$ on Cora (left) and Washington (right).}
		\label{pa5}
\end{figure}


\begin{table*}[h]
\centering
\caption{Results of ablation study. The best performance is marked in \textbf{bold}.}\label{abla1}
\begin{center}
\resizebox{1.\textwidth}{!}{
\begin{tabular}{ l rl| rl| rl| rl| rl| rl| rl}

\midrule
Methods & \multicolumn{2}{c}{{PFGC w/o SE}} & \multicolumn{2}{c}{{PFGC-1}} & \multicolumn{2}{c}{{PFGC-2}} & \multicolumn{2}{c}{{PFGC-3}} & \multicolumn{2}{c}{{PFGC-be}} & \multicolumn{2}{c}{{PFGC-as}} & \multicolumn{2}{c}{{PFGC}} \\ 
\cmidrule(lr){2-3} \cmidrule(lr){4-5} \cmidrule(lr){6-7} \cmidrule(lr){8-9} \cmidrule(lr){10-11}\cmidrule(lr){12-13}\cmidrule(lr){14-15}
    & {ACC} & {NMI} & {ACC} & {NMI}& {ACC} & {NMI} & {ACC} & {NMI}& {ACC} & {NMI} & {ACC} & {NMI}& {ACC} & {NMI}\\
\midrule

Cora

&75.18 &57.20
&75.88 &56.94
&76.07&57.91
&75.30&57.85
&70.81 &50.12
&70.04 &51.47
&\textbf{76.51} &\textbf{58.25}

\\
Citeseer
&71.38 & 44.92
& 67.53 & 41.20
& 69.14 & 42.89
& 65.62 & 40.15
&63.35 &40.04
&68.22 &43.11
&\textbf{71.90} &\textbf{45.45}
\\
Pubmed
& 72.46 & 32.81
& 69.88 & 30.75
& 71.24 & 31.93
& 68.15 & 29.42
&67.95 &30.88
&64.20 &29.25
&\textbf{72.89} &\textbf{33.30} 
\\
AMAP
&75.88 &67.90
&75.35 &67.54
&78.33&70.25
&76.26&67.82
&74.63 &66.90
&69.22 &62.32
&\textbf{78.50} &\textbf{70.81}
 \\
 Wisconsin
&72.51 &47.47
&72.11 &47.42
&70.92&43.59
&68.53&34.87
&69.39 &45.14
&67.33 &41.95
&\textbf{74.10} &\textbf{47.53}
\\
Washington
&68.53 &35.06
&67.21 &36.69
&64.78&33.86
&61.30&32.19
&66.50 &34.88
&62.13 &33.26  
&\textbf{70.43} &\textbf{37.99}
\\
Chameleon
& 40.75 & 21.18
& 39.42 & 20.35
& 38.16 & 19.09
& 36.53 & 17.74
&38.92 &20.38
&36.40 &19.04
&\textbf{41.28}&\textbf{21.62}
\\
Squirrel
& 30.68 & \phantom{0}5.27
& 31.50 & \phantom{0}6.79
& 30.15 & \phantom{0}6.65
& 28.88 & \phantom{0}5.42
&27.98 &\phantom{0}5.14
&29.43 &\phantom{0}6.76 
&\textbf{33.05} &\textbf{\phantom{0}7.58}
\\
\bottomrule
\end{tabular}}
\end{center}
\end{table*}

\begin{table}[]
\centering
\caption{Analysis of time (seconds) and GPU memory cost (GB, in the ``()'') on Cora and Squirrel. ``-'' indicates running on CPU.}
\begin{center}
\resizebox{0.49\textwidth}{!}{
\begin{tabular}{lccccc}
\hline Methods & AGE & RGSL &CCGC & DGCN  & AMLP \\
\hline Cora& 68.23(1.42) & 125.43(-) &65.43(1.28)& 97.42(1.18) & \textbf{60.32(1.12)} \\
\hline Squirrel& 123.52(3.87) & 258.92(-) &96.23(2.57)& 125.62(2.39) & \textbf{87.25(2.17)} \\
\hline
\end{tabular}}
    \label{time}
\end{center}
\end{table}

\subsection{Parameter Analysis}
Our filter operates on \textcolor{black}{recently restructured homophilic and heterophilic graphs}, in which neighboring nodes exhibit \textcolor{black}{intrinsic tendencies toward similarity or dissimilarity}. Furthermore, the loss function \textcolor{black}{incorporates a mechanism to reconstruct k-order structural information}. To assess the efficacy of restructured graphs and high-order structures, we \textcolor{black}{evaluate PFGC’s clustering accuracy over hyper-parameters $\mu$ and $k$ across various graph datasets}. The results are \textcolor{black}{illustrated} in Fig. \ref{pa14}. It can be observed that \textcolor{black}{graph restructuring substantially enhances model performance} on both homophilic and heterophilic graphs. Moreover, optimal performance is consistently achieved when $\mu \neq 0$, \textcolor{black}{highlighting the critical role of incorporating high-frequency information}. The homophilic graph attains peak performance at small $\mu$ values, whereas heterophilic graphs favor larger $\mu$, \textcolor{black}{implying that homophilic graphs emphasize low-frequency components, while heterophilic graphs are more sensitive to high-frequency components}. Additionally, performance \textcolor{black}{degrades sharply} for large $\mu$ on raw homophilic graphs; \textcolor{black}{thus, excessively separating neighboring nodes in homophilic graphs is inadvisable}. Concerning the hyper-parameter $k$, our model \textcolor{black}{selects smaller values for Cora and larger values for Washington}. This behavior arises because \textcolor{black}{similar nodes are proximal in homophilic graphs but tend to be more distant in heterophilic graphs}.

The loss function includes two balance parameters $\gamma_1$ and $\gamma_2$. To see their effect, we set $\gamma_1$, $\gamma_2$=[1e-5, 1e-3, 1e-1, 1e1, 1e3]. Their effect on clustering ACC of Cora and Washington is shown in Fig.\ref{pa5}. Our technique performs effectively for a wide range of parameters. The influence of $\gamma_2$ on performance is greater than $\gamma_1$, indicating the importance of cluster enhancement.


\subsection{Robustness Analysis}
To demonstrate the robustness of PFGC, we consider scenarios in which the graph structure is \textcolor{black}{corrupted} by noise. Specifically, we \textcolor{black}{randomly add} edges to the graphs while removing an equal number of original edges \cite{fu2022p}. We define the noise ratio as $Nr = \frac{\text{\#random edges}}{\text{\#all edges}}$, representing the \textcolor{black}{proportion} of random edges. With $Nr$ = \{0.2, 0.4, 0.6, 0.8\}, we \textcolor{black}{evaluate} the ACC of AGE, CGC, CCGC, and PFGC on Cora and Citeseer datasets. From Fig.\ref{rebnoise}, it can be observed that PFGC \textcolor{black}{consistently achieves superior} performance. In particular, when the perturbation rate is very high, our method \textcolor{black}{demonstrates greater stability compared to} other clustering approaches. This is because PFGC’s ability to capture both homophilic and heterophilic information, along with local and global insights, \textcolor{black}{ensures its adaptability to diverse} node relationships and structural complexities in graphs.

\begin{figure}[!htbp]
    \centering
    \includegraphics[width=.8\linewidth]{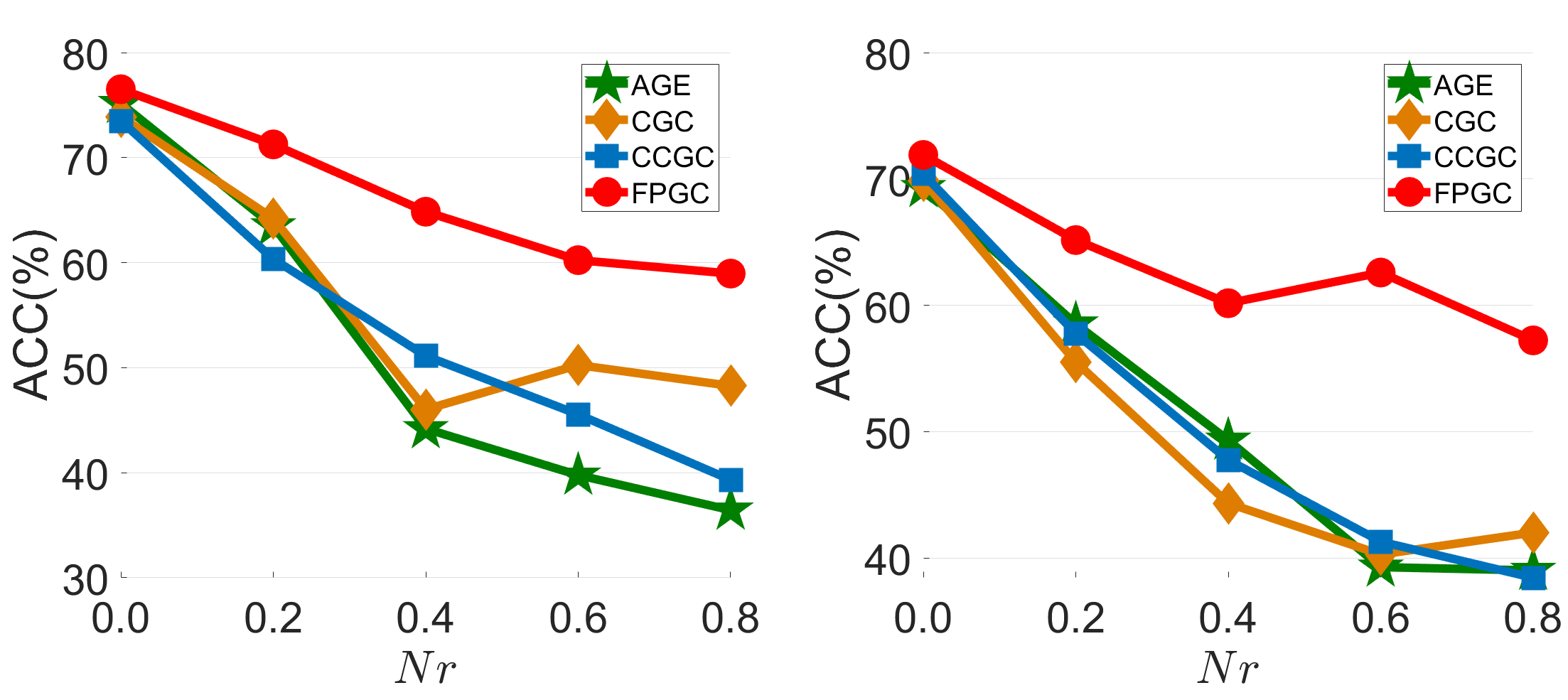}
    \caption{The robustness test on Cora (left) and Citeseer (right).}
    \label{rebnoise}
\end{figure}

\subsection{Ablation Study} \label{abla}
To see the impact of our proposed adaptive GNN, we compare PFGC with different combinations. The $l$-th layer aggregation becomes:

\begin{equation}
\begin{aligned}
H_1^{(l)} & = (1-\mu) \left(I - \frac{2}{3}\tilde{L}_M\right) H_1^{(l-1)} W^{(l-1)}\\
& +\mu \left(\frac{2}{3}\tilde{L}_G\right)H_1^{(l-1)} W^{(l-1)},\\
H_2^{(l)} & = (1-\mu) U_1 \operatorname{diag}\left(\left\{e^{1-\lambda_i^{(1)}}\right\}_{i=1}^N\right) U_1^{\top} H_2^{(l-1)} W^{(l-1)}\\
& +\mu U_2 \operatorname{diag}\left(\left\{e^{\lambda_i^{(2)}}\right\}_{i=1}^N\right) U_2^{\top} H_2^{(l-1)} W^{(l-1)},\\
H_3^{(l)} & = (1-\mu) \left(I - \frac{2}{3}\tilde{L}_M\right) H_3^{(l-1)} W^{(l-1)}\\
& +\mu U_2 \operatorname{diag}\left(\left\{e^{\lambda_i^{(2)}}\right\}_{i=1}^N\right) U_2^{\top} H_3^{(l-1)} W^{(l-1)},
\end{aligned}
\end{equation}
which are marked as PFGC-1, PFGC-2, and PFGC-3, respectively. The results are shown in Table \ref{abla1}. PFGC achieves the best performance in all cases. PFGC \textcolor{black}{outperforms} PFGC-1 by a lot, since PFGC-1 only considers traditional local filters. Traditional local filters are limited to \textcolor{black}{aggregating} information from immediate neighbors. They fail to capture the global homophilic connections inherent in $M$, where similar nodes may be topologically distant. Therefore, a global filter is \textcolor{black}{advantageous} in enhancing the quality of the representation. PFGC-3 achieves the worst performance in most cases, which is consistent with our theorem. PFGC-3 violates this spectral principle, \textcolor{black}{causing it to amplify} the noise in heterophilic graphs rather than the useful signal.

To further analyze the quality of our reconstructed graphs, we set the values in the graph as binary edges, which are marked as "PFGC-be". We also set the values in the lower triangular area to zero to make the graphs asymmetric, which is marked as "PFGC-as". Table \ref{abla1} shows that the results \textcolor{black}{decrease dramatically}. This demonstrates that both continuity and symmetry are crucial for the graph structure.

We also conduct experiments to evaluate the \textcolor{black}{efficiency} of our method. We compare PFGC with state-of-the-art methods in running time and GPU memory cost on Cora and Squirrel. The results are shown in Table \ref{time}. It can be seen that PFGC is fast and \textcolor{black}{requires minimal computing resources}. This is because our restructuring method only requires cosine similarity, and the loss functions are computationally efficient. Therefore, PFGC incurs no additional runtime compared to our baselines.

To validate the theoretical analysis, we apply the t-SNE technique \cite{van2008visualizing} to visualize the learned node features with different filters, that is, $h_1(L)$, $h_2(L)$, $h_3(L)$, and $h_4(L)$. The results are illustrated in Fig. \ref{tsne}. For a single filter, a global low-pass filter \textcolor{black}{improves} cluster discriminability more effectively than a local low-pass filter, whereas a local high-pass filter \textcolor{black}{performs better than} a global high-pass filter in the same context. This observation aligns with the theoretical analysis.

\begin{figure*}[t]
		\centering
  		\subfigure[Global low-pass filter $h_1(L)$.]{
			\includegraphics[width=0.23\linewidth]{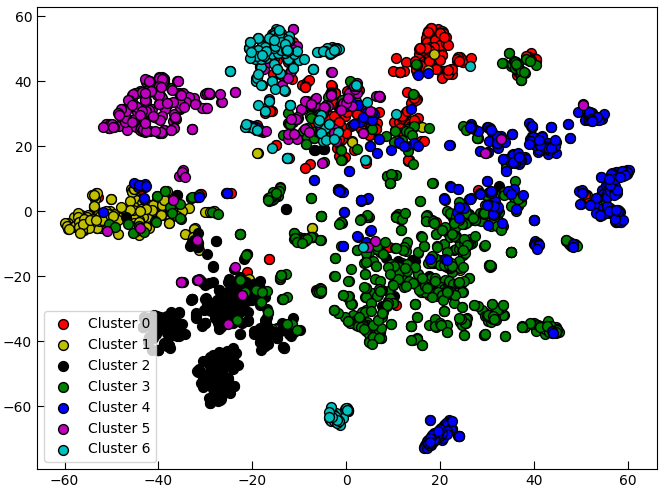}
		}
  		\subfigure[Local low-pass filter $h_2(L)$.]{
			\includegraphics[width=0.23\linewidth]{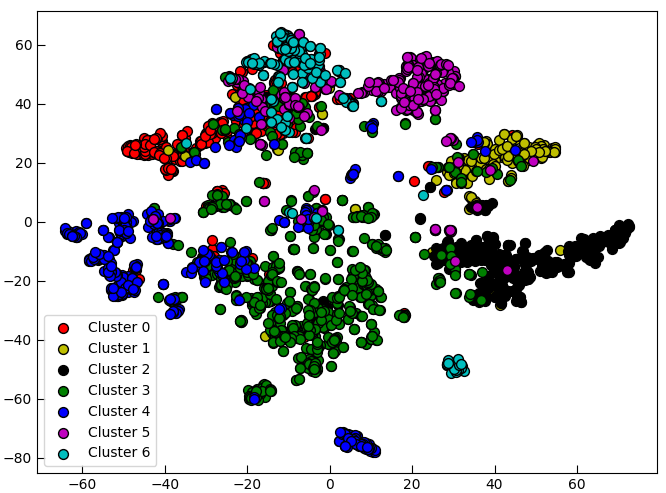}
		}
  		\subfigure[Global high-pass filter $h_3(L)$.]{
			\includegraphics[width=0.23\linewidth]{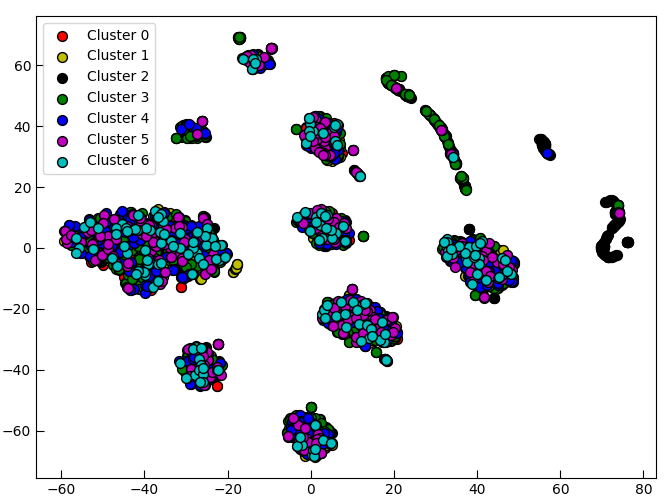}
		}
  		\subfigure[Local high-pass filter $h_4(L)$.]{
			\includegraphics[width=0.23\linewidth]{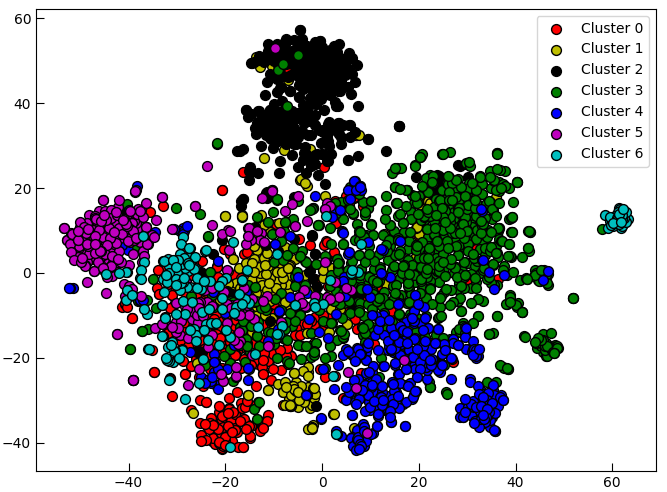}
		}
		\caption{t-SNE on Cora with different filters.}
		\label{tsne}
\end{figure*}

The homophily scores of the original graph and the constructed graphs are compared in Table \ref{homoresults}. It can be seen that the new graphs are highly homophilic or heterophilic. This shows that our graph restructuring method provides richer information.

\begin{table}[htbp]
		\centering
		\footnotesize
		\caption{The homophily scores of restructured graphs.}\label{homoresults}
		\setlength{\tabcolsep}{1mm}{\begin{tabular}{cccc}
    \toprule
          & A & M & G \\
    \midrule
    Cora  & 0.8100  & 0.8268$\uparrow$  & 0.1046\phantom{0}  \\
    Citeseer & 0.7355  & 0.7790$\uparrow$  & 0.0843\phantom{0}  \\
    Pubmed &0.8024 & 0.8204$\uparrow$ &0.2212\phantom{0} \\
    AMAP  & 0.8272 & 0.8836$\uparrow$ & 0.0653\phantom{0} \\
    UAT & 0.6978 & 0.7005$\uparrow$ & 0.1872\phantom{0} \\
    \midrule
    Cornell & 0.1227  & 0.4807$\uparrow$  & 0.2142\phantom{0}  \\
    Wisconsin & 0.1703  & 0.5059$\uparrow$  & 0.0360$\textcolor{green}{\downarrow}$ \\
    Washington  & 0.1530 & 0.5443${\uparrow}$ & 0.0653$\textcolor{green}{\downarrow}$\\

    Chameleon &0.2299 &0.5382${\uparrow}$ &0.1742$\textcolor{green}{\downarrow}$\\

    Squirrel & 0.2234  & 0.4781$\uparrow$  & 0.1999$\textcolor{green}{\downarrow}$\\

    Roman-Empire & 0.0469  & 0.5000$\uparrow$  & 0.0223$\textcolor{green}{\downarrow}$\\
    \bottomrule
    \end{tabular}}
	\end{table}%

\begin{figure}[!htbp]
    \centering
    \includegraphics[width=.8\linewidth]{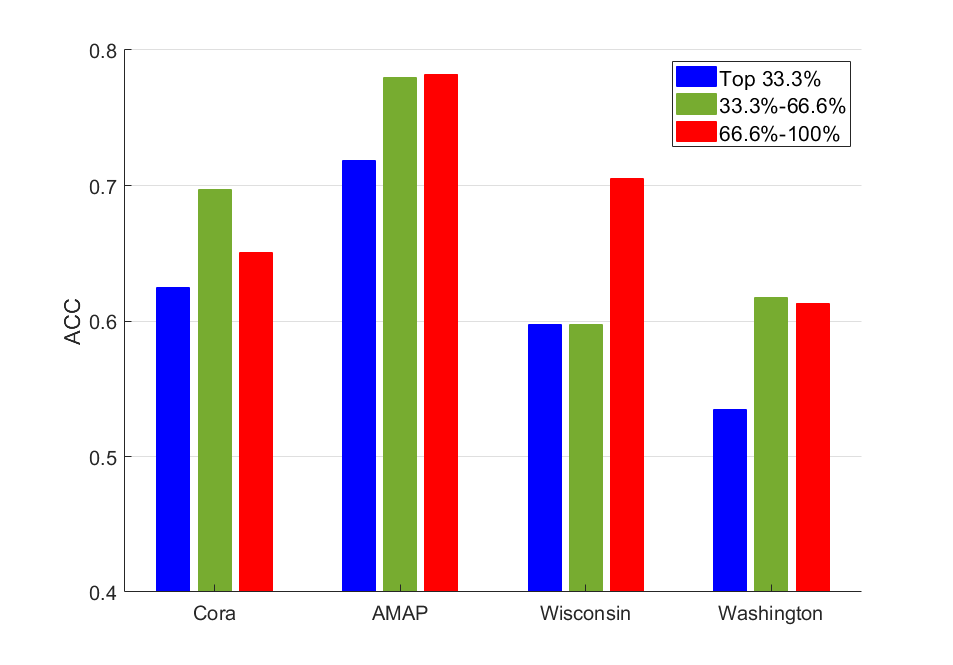}
    \caption{The results with masked features.}
    \label{sth04}
\end{figure}

To test the squeeze-and-excitation block's effect, we remove it and mark it as PFGC w/o SE. According to Table \ref{abla1}, there is a \textcolor{black}{performance drop}, indicating that the squeeze-and-excitation block improves the quality of the representation. We mask the final features $\tilde{H}$ according to the attention weights (i.e. $s$). We categorize it into three intervals: masking features of the top 33.3\% (\textcolor{black}{highest}) weights, masking features from 33.3\% to 66.6\% weights, and the remaining 66.6\% to 100\%. After masking, we use K-means on the features to obtain the ACC results. As depicted in Fig. \ref{sth04}, when masking the most important features (top 33.3\%), the performance \textcolor{black}{experiences the most significant decrease}. This suggests that the attention weights corresponding to the top features are crucial. For 33.3\%-66.6\% and 66.6\%-100\%, their effect \textcolor{black}{varies across} different datasets. Therefore, our proposed squeeze-and-excitation block can successfully \textcolor{black}{highlight} important features and thus enhance performance.


 \section{Experiments on Co-saliency Detection}
\subsection{Implementation Details}
In this section, we examine the performance of our method in co-saliency detection task, which aims to discover the common and salient foregrounds from a group of relevant images. We replace the GNN architecture in GCAGC \cite{zhang2020adaptive} with our proposed filter and only use features for graph restructuring. We add an additional baseline UFO \cite{su2023unified}, the recent SOTA technique for the co-saliency detection task. We also test on PFGC-1, PFGC-2, and PFGC-3 defined in \ref{abla}. The experimental settings and the training process follow GCAGC. We adopt the popular datasets iCoseg and Cosal2015. iCoseg contains 643 images in 38 groups, with common objects in each group sharing similar semantic traits despite pose and color variations; Cosal2015 includes 2,015 images in 50 categories, and each group faces challenges like complex environments. The evaluating metrics are Mean Absolute Error (MAE), F-measure score $F_\beta$, and S-measure score $S_m$.

\subsection{Results and Analysis}
The results are shown in Table \ref{rebtable}. It can be seen that PFGC \textcolor{black}{demonstrates improvement} in all cases. This is attributed to our filter, which captures local and global contextual information, \textcolor{black}{crucial} for identifying subtle co-salient features within diverse scenes. In addition, it is adept at handling both homophilic and heterophilic information, allowing it to identify common structures in images with varying degrees of similarity and background complexity. Note that the key difference between PFGC and GCAGC is the graph filter in the encoder. Thus, our graph filter has \textcolor{black}{considerable potential} for other visual tasks. \textcolor{black}{Moreover}, it can be observed that FPGC \textcolor{black}{outperforms} FPGC-1, FPGC-2, and FPGC-3. This result is consistent with the clustering outcomes and further validates the effectiveness of the proposed filter. PFGC-2 can perform better than PFGC-1 and PFGC-3 in most cases, once again highlighting the importance of the global homophily.

Fig. \ref{vis2} illustrates some visual comparison results. When the co-salient targets are surrounded by complex background clutter, strong semantic interference, and significant appearance fluctuations, our PFGC method \textcolor{black}{achieves superior} performance in terms of co-saliency compared to the baselines. This further highlights the distinct advantages of PFGC over GCAGC and UFO. In particular, UFO fails to identify the interior of an apple, while noise in the vicinity of the target is mistakenly recognized as a target item in both the boat and car images. Our method generates more accurate co-object masks \textcolor{black}{comparatively}, thereby validating its robustness and effectiveness.

We provide a visual comparison of different filters in Fig. \ref{vis}, which shows the encoder output. It can be seen that using only high-pass or low-pass filters leads to incomplete or failed saliency maps. Low-pass filters are most effective when applied globally, as they aggregate features across large regions, capturing semantic information such as object categories in image segmentation. By emphasizing low-frequency information, these filters smooth out noise and discontinuities at boundaries, enhancing the overall consistency of the output. However, excessive smoothing can \textcolor{black}{obscure} fine details, such as edges and textures, which are often critical for accurate representation. High-frequency components, on the other hand, preserve intricate details but are more prone to noise and instability when applied globally. To mitigate this, high-pass filters should be employed locally, focusing on fine-grained structural details while minimizing noise amplification. Striking the right balance between these frequency components is essential for achieving optimal performance across a variety of tasks.

\begin{table}[h]
\centering
\caption{Results of co-saliency detection. The best performance is marked in \textbf{bold}.}\label{rebtable}
\begin{tabular}{llcl|lll}
\hline 
\multirow{2}{*}{Methods} & \multicolumn{3}{c|}{$\operatorname{iCoseg}$} & \multicolumn{3}{c}{$\operatorname{Cosal2015}$} \\
\cline{2-7}
& MAE $\downarrow$ & $F_\beta$ $\uparrow$& $S_m$ $\uparrow$ & MAE $\downarrow$ & $F_\beta$ $\uparrow$& $S_m$ $\uparrow$ \\
\hline 
GCAGC & 0.076 & 0.853 & 0.821 & 0.089 & 0.843 & 0.822 \\
UFO &\phantom{0}\phantom{0}-&-&\phantom{0}\phantom{0}- &0.064 &0.865 &0.860 \\
PFGC-1 &0.065&0.811&0.801&0.093&0.825& 0.814 \\
PFGC-2 &0.060&0.855&0.825&0.060&0.844& 0.852\\
PFGC-3 &0.082&0.843&0.817&0.071&0.817& 0.817\\
PFGC & \textbf{0.053} & \textbf{0.861} & \textbf{0.832} & \textbf{0.056} & \textbf{0.886} & \textbf{0.865} \\
\hline
\end{tabular}
\end{table}

\begin{figure}[htbp]
    \centering
    \textbf{iCoseg}

    \parbox[b]{0.01\textwidth}{\centering \scriptsize \textbf{Input}}
    \hfill
    \begin{minipage}{0.07\textwidth}\centering
        \includegraphics[width=\textwidth]{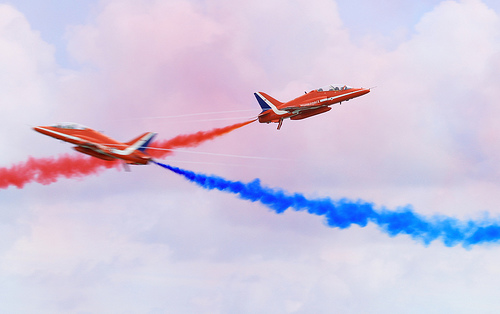}
    \end{minipage}\hfill
    \begin{minipage}{0.07\textwidth}\centering
        \includegraphics[width=\textwidth]{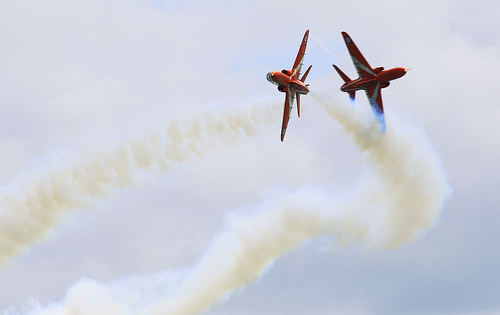}
    \end{minipage}\hfill
    \begin{minipage}{0.07\textwidth}\centering
        \includegraphics[width=\textwidth]{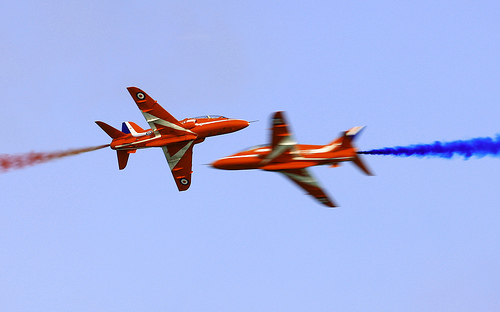}
    \end{minipage}\hfill
    \begin{minipage}{0.07\textwidth}\centering
        \includegraphics[height=0.9\textwidth]{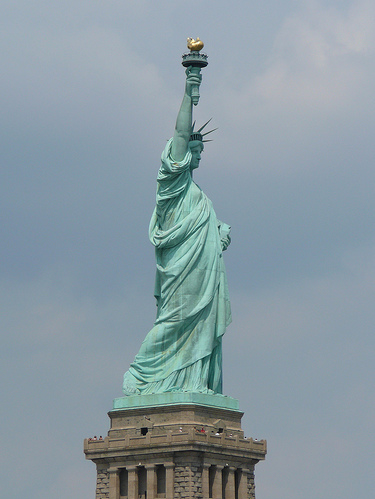}
    \end{minipage}\hfill
    \begin{minipage}{0.07\textwidth}\centering
        \includegraphics[height=0.8\textwidth]{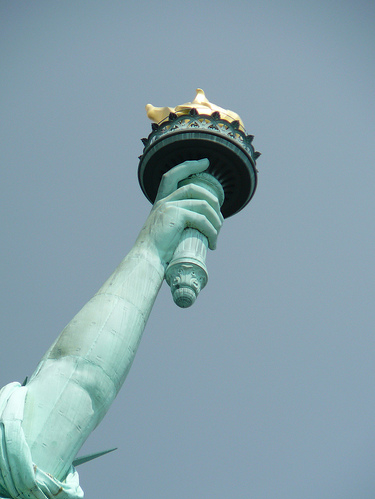}
    \end{minipage}

    \vspace{0.1em}

    \parbox[b]{0.01\textwidth}{\centering \scriptsize \textbf{GT}}
    \hfill
    \begin{minipage}{0.07\textwidth}\centering
        \includegraphics[width=\textwidth]{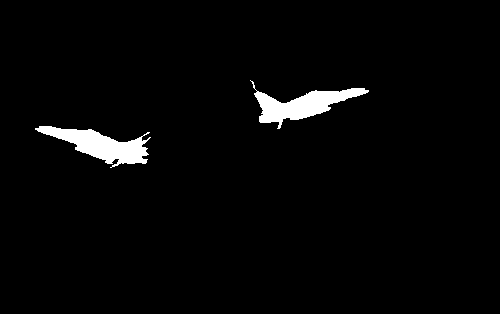}
    \end{minipage}\hfill
    \begin{minipage}{0.07\textwidth}\centering
        \includegraphics[width=\textwidth]{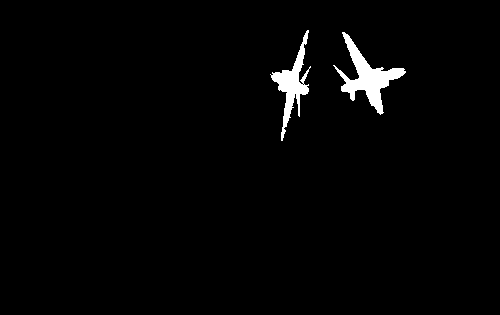}
    \end{minipage}\hfill
    \begin{minipage}{0.07\textwidth}\centering
        \includegraphics[width=\textwidth]{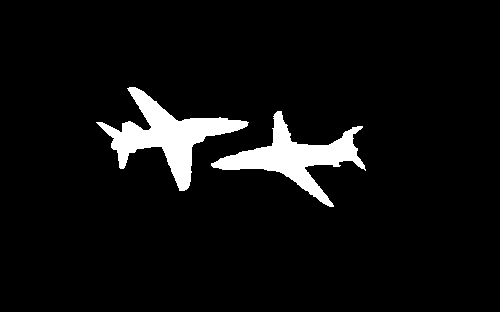}
    \end{minipage}\hfill
    \begin{minipage}{0.07\textwidth}\centering
        \includegraphics[height=0.9\textwidth]{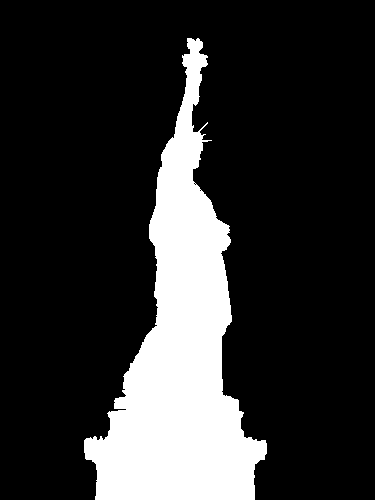}
    \end{minipage}\hfill
    \begin{minipage}{0.07\textwidth}\centering
        \includegraphics[height=0.8\textwidth]{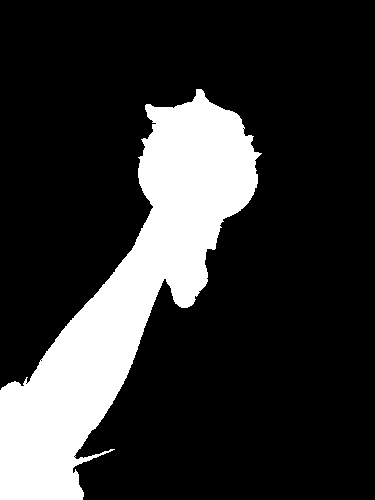}
    \end{minipage}

    \vspace{0.1em}

    \parbox[b]{0.01\textwidth}{\centering \scriptsize \textcolor[rgb]{1,0,0}{\textbf{PFGC}}}
    \hfill
    \begin{minipage}{0.07\textwidth}\centering
        \includegraphics[width=\textwidth]{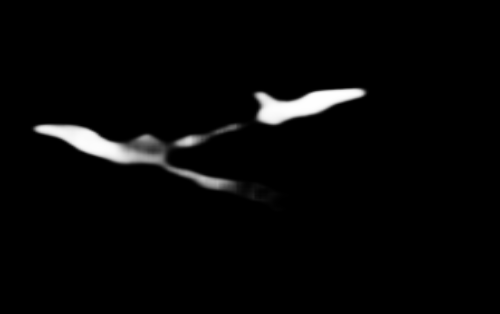}
    \end{minipage}\hfill
    \begin{minipage}{0.07\textwidth}\centering
        \includegraphics[width=\textwidth]{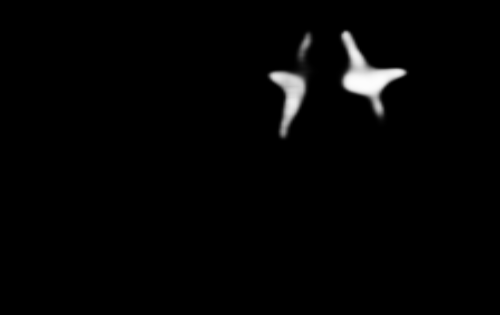}
    \end{minipage}\hfill
    \begin{minipage}{0.07\textwidth}\centering
        \includegraphics[width=\textwidth]{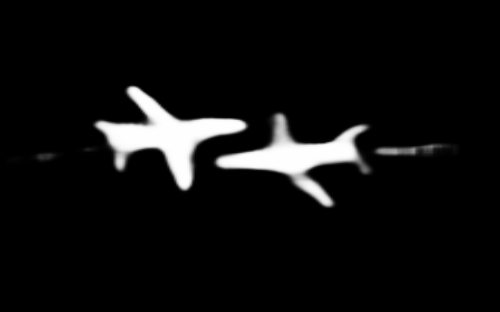}
    \end{minipage}\hfill
    \begin{minipage}{0.07\textwidth}\centering
        \includegraphics[height=0.9\textwidth]{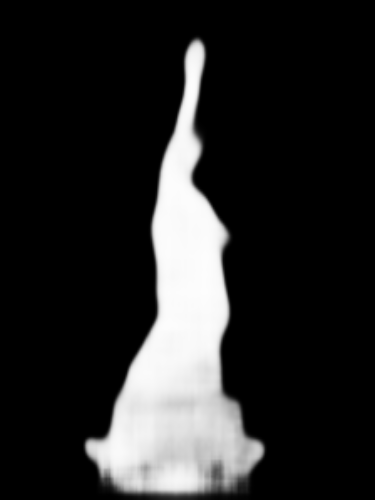}
    \end{minipage}\hfill
    \begin{minipage}{0.07\textwidth}\centering
        \includegraphics[height=0.8\textwidth]{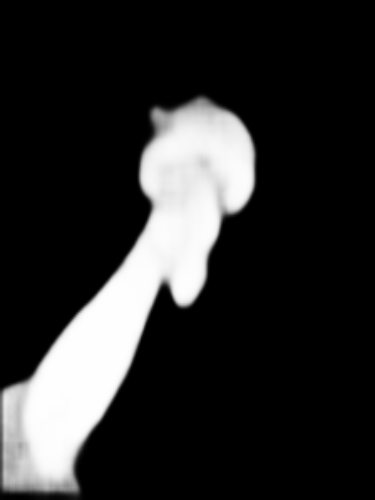}
    \end{minipage}

    \vspace{0.1em}

    \parbox[b]{0.01\textwidth}{\centering \tiny \textbf{GCAGC}}
    \hfill
    \begin{minipage}{0.07\textwidth}\centering
        \includegraphics[width=\textwidth]{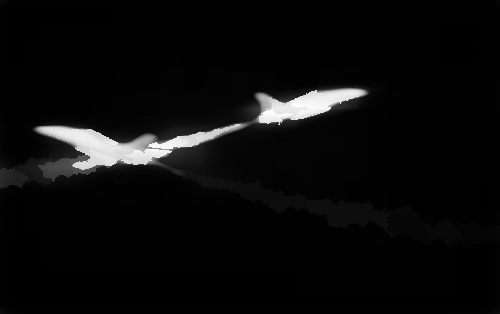}
    \end{minipage}\hfill
    \begin{minipage}{0.07\textwidth}\centering
        \includegraphics[width=\textwidth]{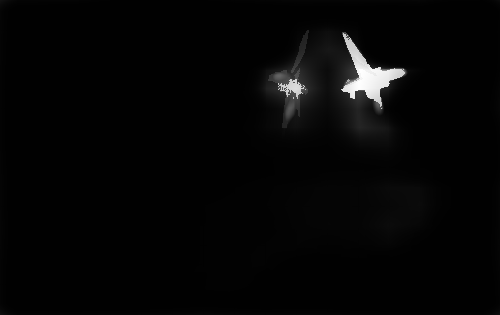}
    \end{minipage}\hfill
    \begin{minipage}{0.07\textwidth}\centering
        \includegraphics[width=\textwidth]{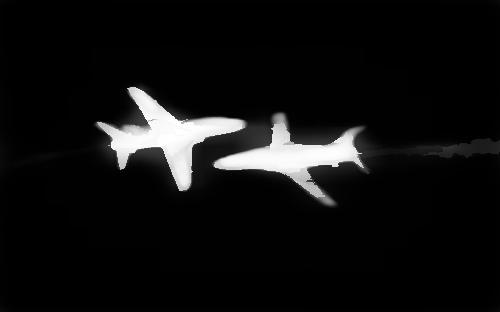}
    \end{minipage}\hfill
    \begin{minipage}{0.07\textwidth}\centering
        \includegraphics[height=0.9\textwidth]{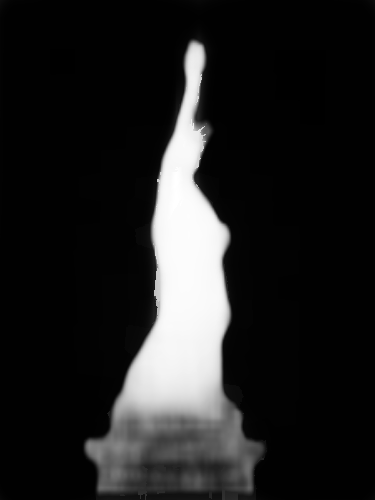}
    \end{minipage}\hfill
    \begin{minipage}{0.07\textwidth}\centering
        \includegraphics[height=0.8\textwidth]{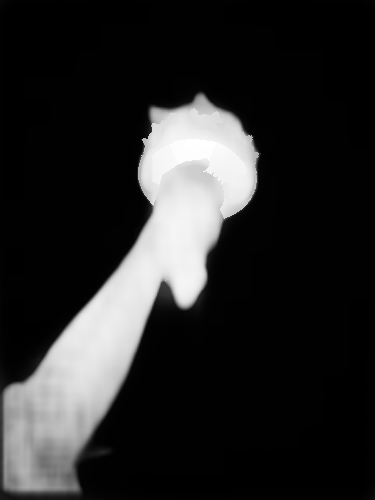}
    \end{minipage}

    \vspace{0.3em}
    \textbf{Cosal2015}

    \parbox[b]{0.01\textwidth}{\centering \scriptsize \textbf{Input}}
    \hfill
    \begin{minipage}{0.07\textwidth}\centering
        \includegraphics[width=\textwidth]{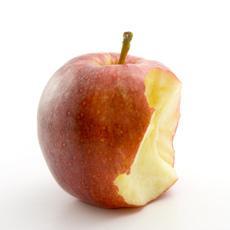}
    \end{minipage}\hfill
    \begin{minipage}{0.07\textwidth}\centering
        \includegraphics[width=\textwidth]{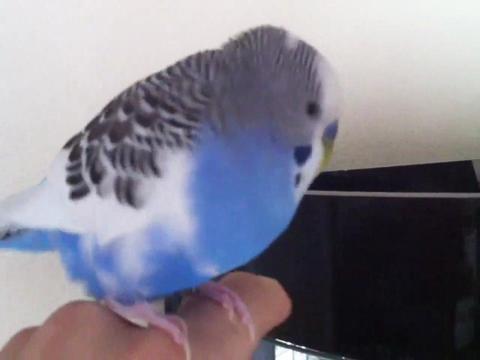}
    \end{minipage}\hfill
    \begin{minipage}{0.07\textwidth}\centering
        \includegraphics[width=\textwidth]{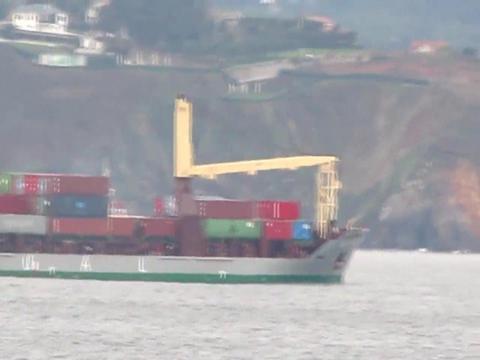}
    \end{minipage}\hfill
    \begin{minipage}{0.07\textwidth}\centering
        \includegraphics[width=\textwidth]{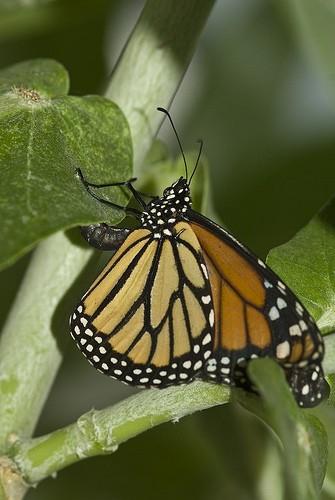}
    \end{minipage}\hfill
    \begin{minipage}{0.07\textwidth}\centering
        \includegraphics[width=\textwidth]{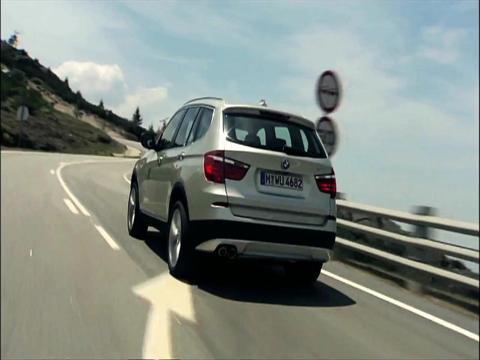}
    \end{minipage}

    \vspace{0.1em}

    \parbox[b]{0.01\textwidth}{\centering \scriptsize \textbf{GT}}
    \hfill
    \begin{minipage}{0.07\textwidth}\centering
        \includegraphics[width=\textwidth]{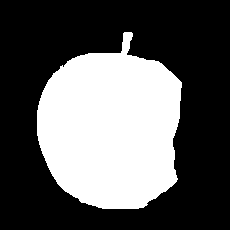}
    \end{minipage}\hfill
    \begin{minipage}{0.07\textwidth}\centering
        \includegraphics[width=\textwidth]{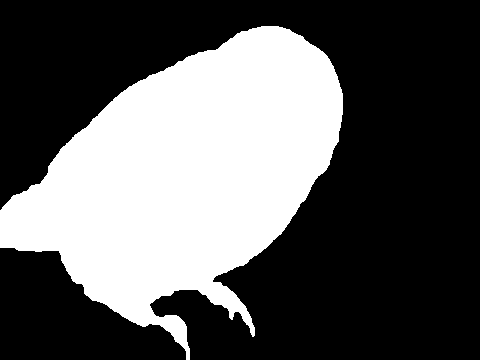}
    \end{minipage}\hfill
    \begin{minipage}{0.07\textwidth}\centering
        \includegraphics[width=\textwidth]{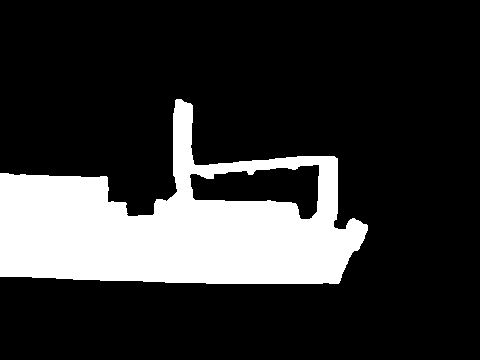}
    \end{minipage}\hfill
    \begin{minipage}{0.07\textwidth}\centering
        \includegraphics[width=\textwidth]{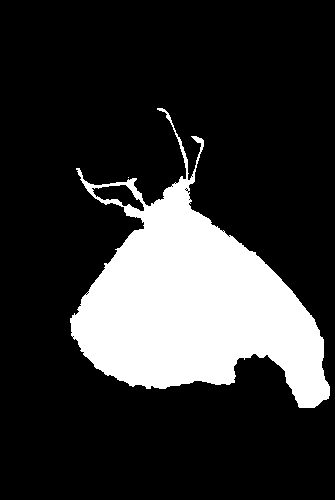}
    \end{minipage}\hfill
    \begin{minipage}{0.07\textwidth}\centering
        \includegraphics[width=\textwidth]{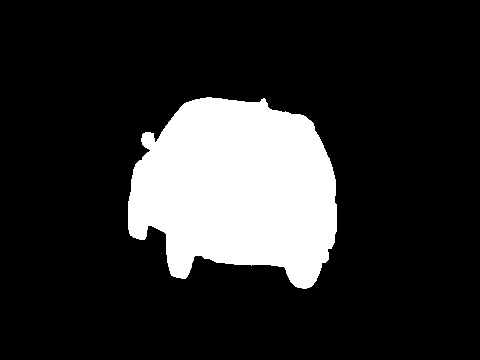}
    \end{minipage}

    \vspace{0.1em}

    \parbox[b]{0.01\textwidth}{\centering \scriptsize \textcolor[rgb]{1,0,0}{\textbf{PFGC}}}
    \hfill
    \begin{minipage}{0.07\textwidth}\centering
        \includegraphics[width=\textwidth]{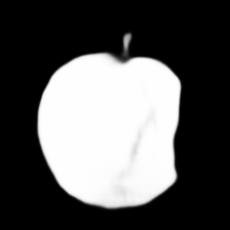}
    \end{minipage}\hfill
    \begin{minipage}{0.07\textwidth}\centering
        \includegraphics[width=\textwidth]{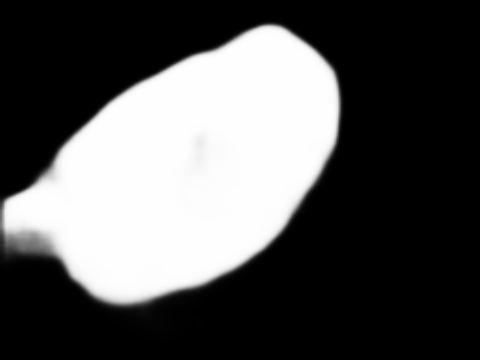}
    \end{minipage}\hfill
    \begin{minipage}{0.07\textwidth}\centering
        \includegraphics[width=\textwidth]{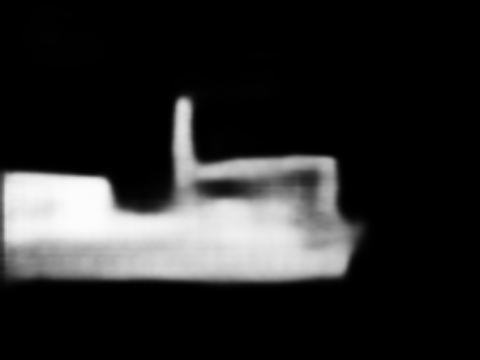}
    \end{minipage}\hfill
    \begin{minipage}{0.07\textwidth}\centering
        \includegraphics[width=\textwidth]{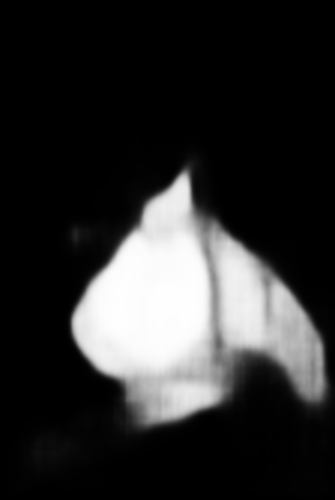}
    \end{minipage}\hfill
    \begin{minipage}{0.07\textwidth}\centering
        \includegraphics[width=\textwidth]{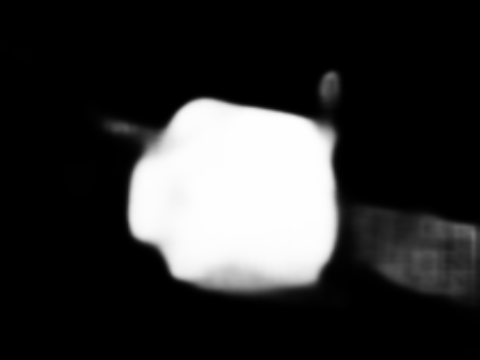}
    \end{minipage}

    \vspace{0.1em}

    \parbox[b]{0.01\textwidth}{\centering \tiny \textbf{GCAGC}}
    \hfill
    \begin{minipage}{0.07\textwidth}\centering
        \includegraphics[width=\textwidth]{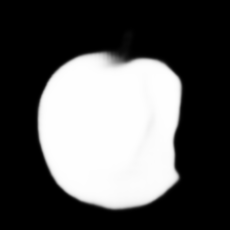}
    \end{minipage}\hfill
    \begin{minipage}{0.07\textwidth}\centering
        \includegraphics[width=\textwidth]{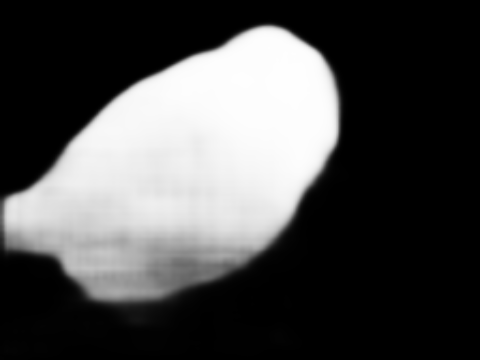}
    \end{minipage}\hfill
    \begin{minipage}{0.07\textwidth}\centering
        \includegraphics[width=\textwidth]{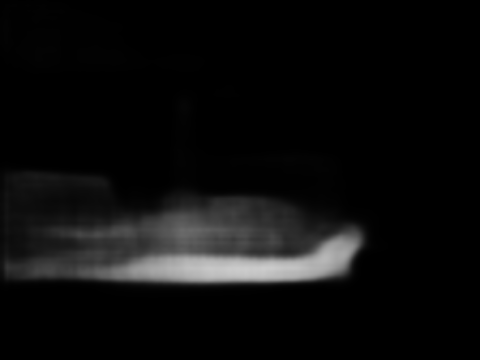}
    \end{minipage}\hfill
    \begin{minipage}{0.07\textwidth}\centering
        \includegraphics[width=\textwidth]{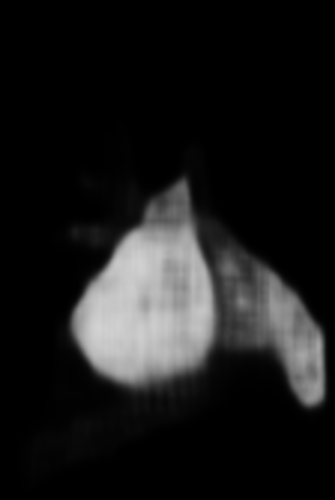}
    \end{minipage}\hfill
    \begin{minipage}{0.07\textwidth}\centering
        \includegraphics[width=\textwidth]{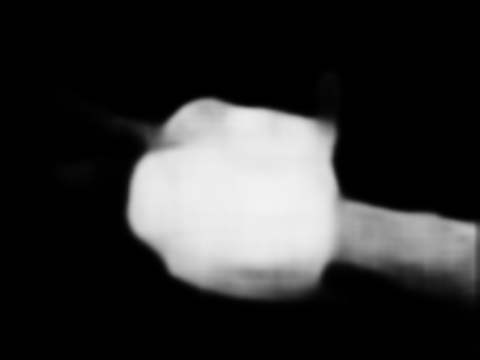}
    \end{minipage}

    \vspace{0.1em}

    \parbox[b]{0.01\textwidth}{\centering \tiny \textbf{UFO}}
    \hfill
    \begin{minipage}{0.07\textwidth}\centering
        \includegraphics[width=\textwidth]{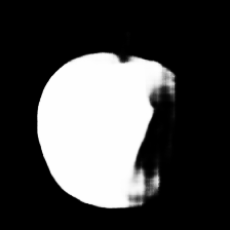}
    \end{minipage}\hfill
    \begin{minipage}{0.07\textwidth}\centering
        \includegraphics[width=\textwidth]{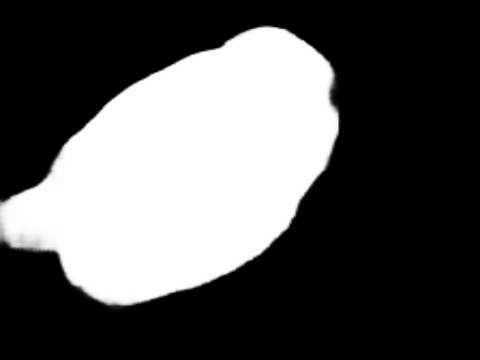}
    \end{minipage}\hfill
    \begin{minipage}{0.07\textwidth}\centering
        \includegraphics[width=\textwidth]{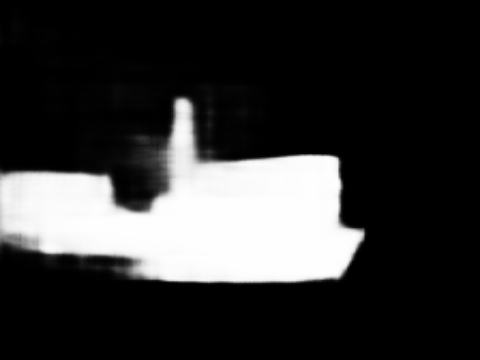}
    \end{minipage}\hfill
    \begin{minipage}{0.07\textwidth}\centering
        \includegraphics[width=\textwidth]{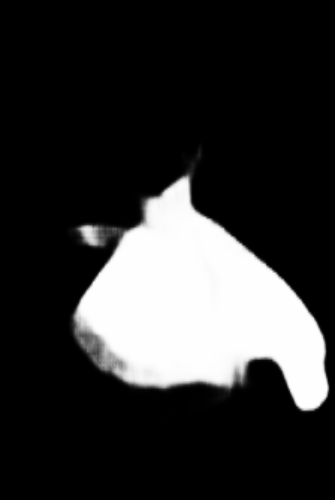}
    \end{minipage}\hfill
    \begin{minipage}{0.07\textwidth}\centering
        \includegraphics[width=\textwidth]{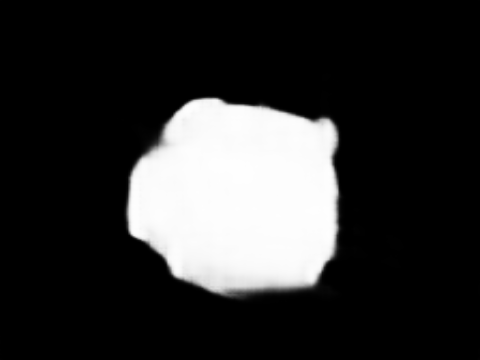}
    \end{minipage}

    \caption{Visual comparisons of our PFGC method against GCAGC and UFO. ``GT'' indicates the ground truth.}
    \label{vis2}
\end{figure}

\begin{figure}[htbp]
    \centering
    \parbox[b]{0.01\textwidth}{\centering \scriptsize \textbf{Image Group}}
    \hfill
    \begin{minipage}{0.07\textwidth}\centering
        \includegraphics[width=\textwidth]{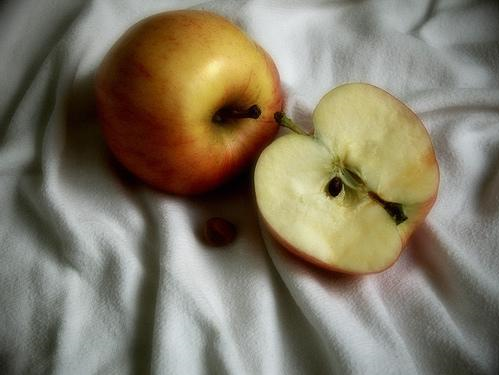}
    \end{minipage}\hfill
    \begin{minipage}{0.07\textwidth}\centering
        \includegraphics[width=\textwidth]{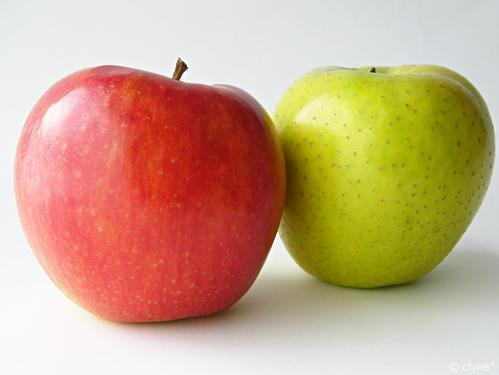}
    \end{minipage}\hfill
    \begin{minipage}{0.07\textwidth}\centering
        \includegraphics[width=\textwidth]{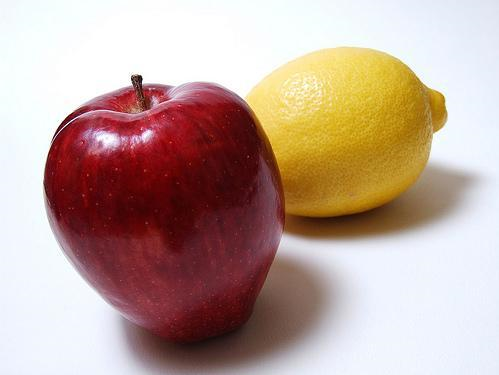}
    \end{minipage}\hfill
    \begin{minipage}{0.07\textwidth}\centering
        \includegraphics[width=\textwidth]{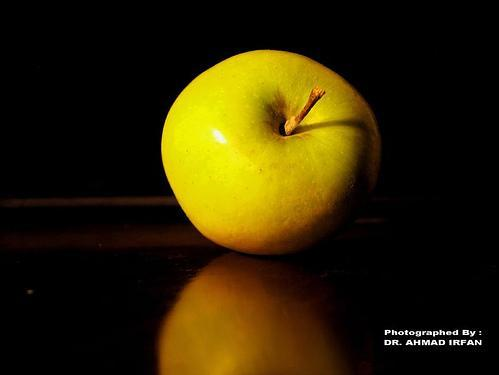}
    \end{minipage}\hfill
    \begin{minipage}{0.07\textwidth}\centering
        \includegraphics[height=0.8\textwidth]{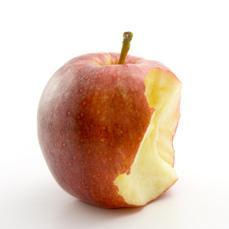}
    \end{minipage}

    \vspace{0.1em}

    \parbox[b]{0.01\textwidth}{\centering \scriptsize \textbf{$h_1(L)$}}
    \hfill
    \begin{minipage}{0.07\textwidth}\centering
        \includegraphics[width=\textwidth]{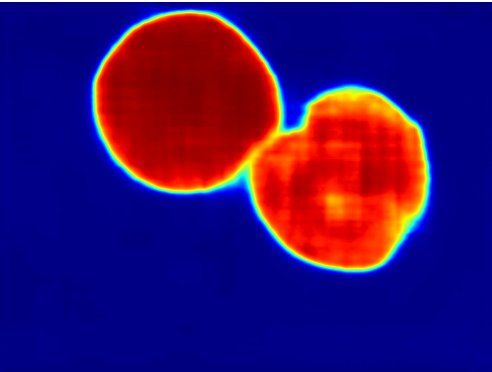}
    \end{minipage}\hfill
    \begin{minipage}{0.07\textwidth}\centering
        \includegraphics[width=\textwidth]{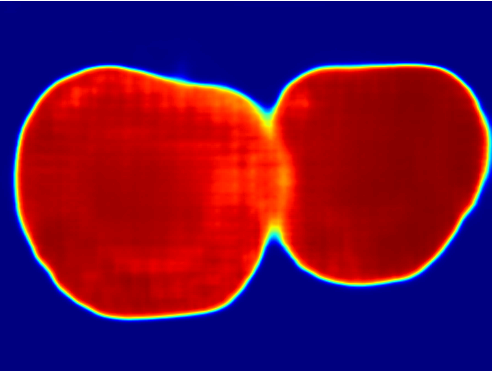}
    \end{minipage}\hfill
    \begin{minipage}{0.07\textwidth}\centering
        \includegraphics[width=\textwidth]{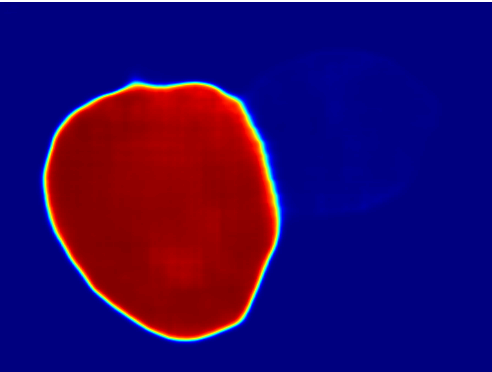}
    \end{minipage}\hfill
    \begin{minipage}{0.07\textwidth}\centering
        \includegraphics[width=\textwidth]{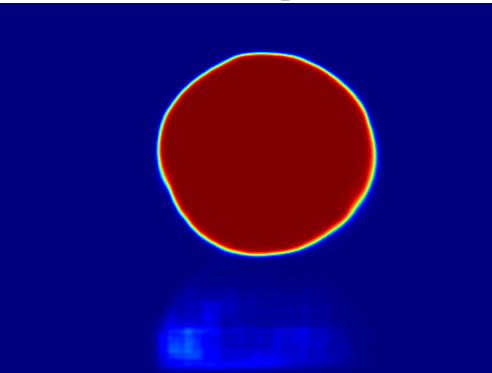}
    \end{minipage}\hfill
    \begin{minipage}{0.07\textwidth}\centering
        \includegraphics[height=0.8\textwidth]{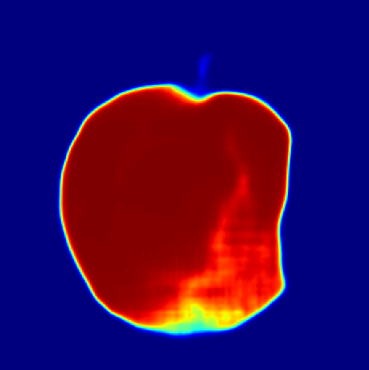}
    \end{minipage}

    \vspace{0.1em}

    \parbox[b]{0.01\textwidth}{\centering \scriptsize \textbf{$h_2(L)$}}
    \hfill
    \begin{minipage}{0.07\textwidth}\centering
        \includegraphics[width=\textwidth]{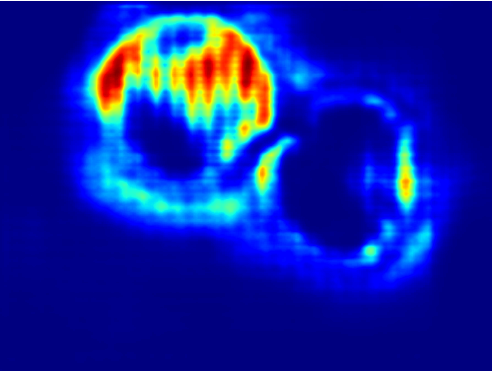}
    \end{minipage}\hfill
    \begin{minipage}{0.07\textwidth}\centering
        \includegraphics[width=\textwidth]{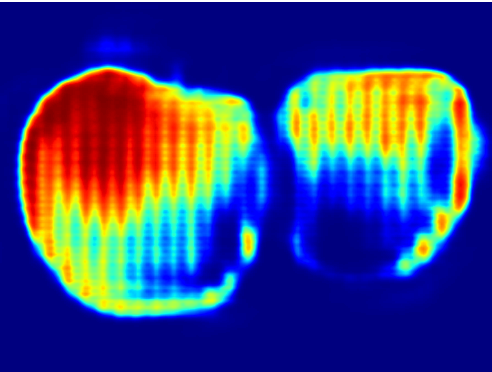}
    \end{minipage}\hfill
    \begin{minipage}{0.07\textwidth}\centering
        \includegraphics[width=\textwidth]{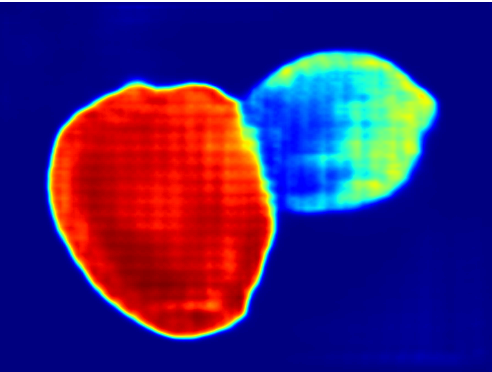}
    \end{minipage}\hfill
    \begin{minipage}{0.07\textwidth}\centering
        \includegraphics[width=\textwidth]{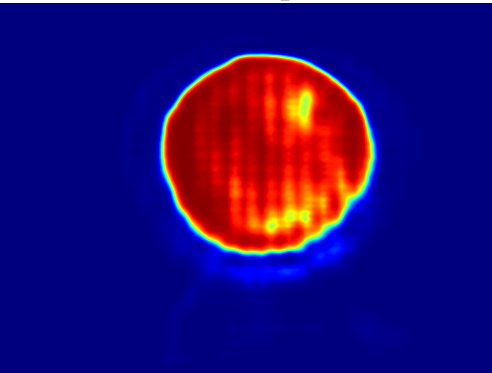}
    \end{minipage}\hfill
    \begin{minipage}{0.07\textwidth}\centering
        \includegraphics[height=0.8\textwidth]{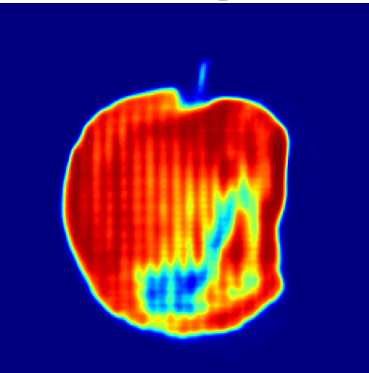}
    \end{minipage}

    \vspace{0.1em}

    \parbox[b]{0.01\textwidth}{\centering \scriptsize \textbf{$h_3(L)$}}
    \hfill
    \begin{minipage}{0.07\textwidth}\centering
        \includegraphics[width=\textwidth]{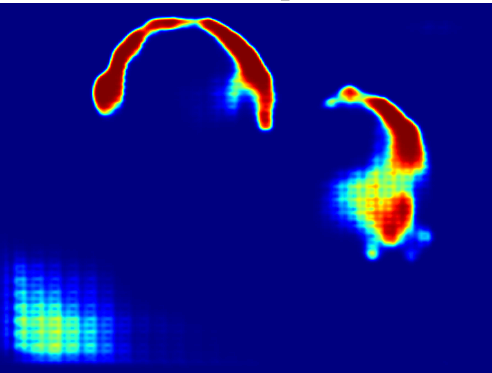}
    \end{minipage}\hfill
    \begin{minipage}{0.07\textwidth}\centering
        \includegraphics[width=\textwidth]{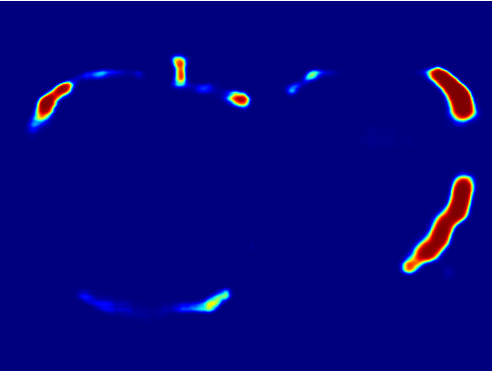}
    \end{minipage}\hfill
    \begin{minipage}{0.07\textwidth}\centering
        \includegraphics[width=\textwidth]{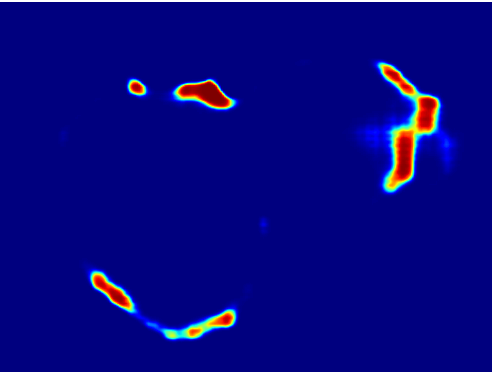}
    \end{minipage}\hfill
    \begin{minipage}{0.07\textwidth}\centering
        \includegraphics[width=\textwidth]{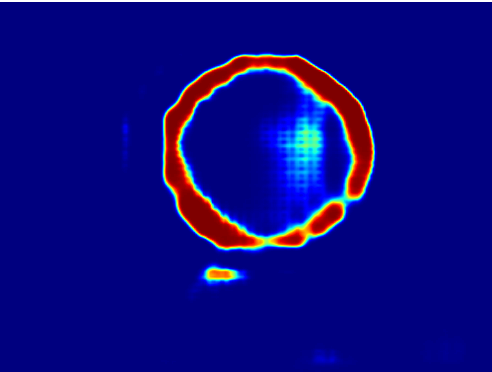}
    \end{minipage}\hfill
    \begin{minipage}{0.07\textwidth}\centering
        \includegraphics[height=0.8\textwidth]{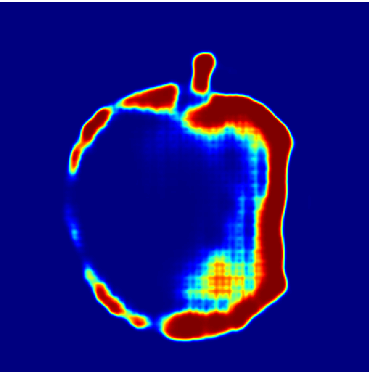}
    \end{minipage}

    \vspace{0.1em}

    \parbox[b]{0.01\textwidth}{\centering \scriptsize \textbf{$h_4(L)$}}
    \hfill
    \begin{minipage}{0.07\textwidth}\centering
        \includegraphics[width=\textwidth]{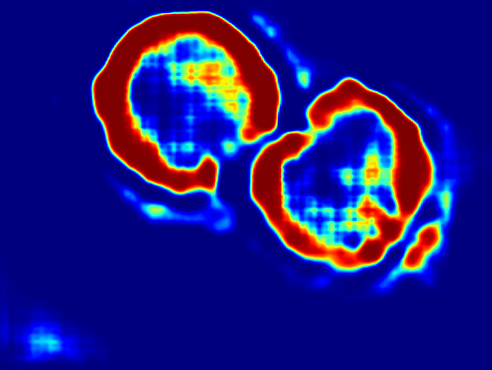}
    \end{minipage}\hfill
    \begin{minipage}{0.07\textwidth}\centering
        \includegraphics[width=\textwidth]{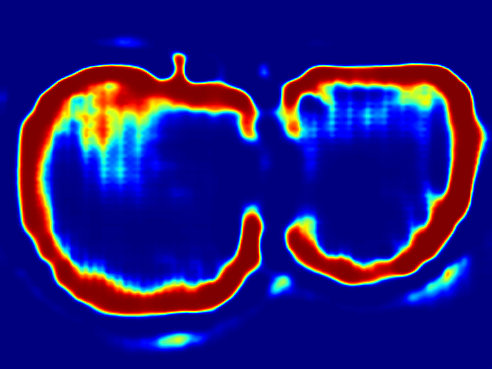}
    \end{minipage}\hfill
    \begin{minipage}{0.07\textwidth}\centering
        \includegraphics[width=\textwidth]{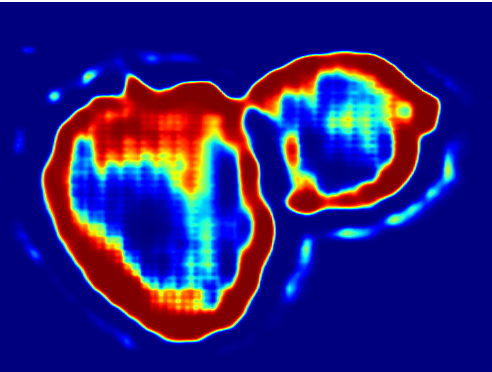}
    \end{minipage}\hfill
    \begin{minipage}{0.07\textwidth}\centering
        \includegraphics[width=\textwidth]{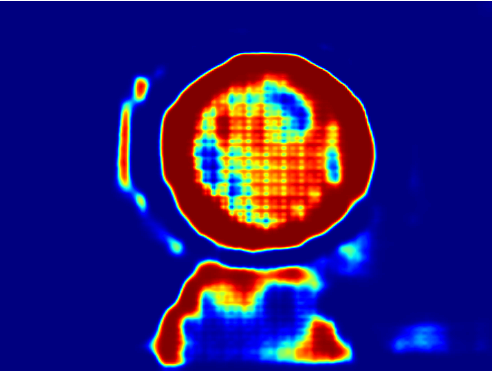}
    \end{minipage}\hfill
    \begin{minipage}{0.07\textwidth}\centering
        \includegraphics[height=0.8\textwidth]{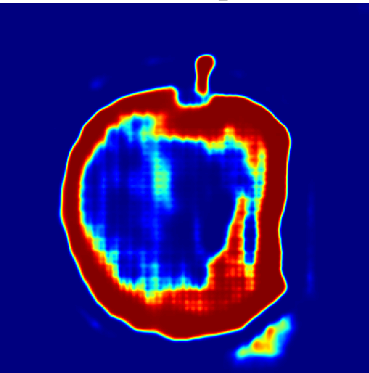}
    \end{minipage}

    \vspace{0.1em}

    \parbox[b]{0.01\textwidth}{\centering \scriptsize \textcolor[rgb]{1,0,0}{\textbf{PFGC}}}
    \hfill
    \begin{minipage}{0.07\textwidth}\centering
        \includegraphics[width=\textwidth]{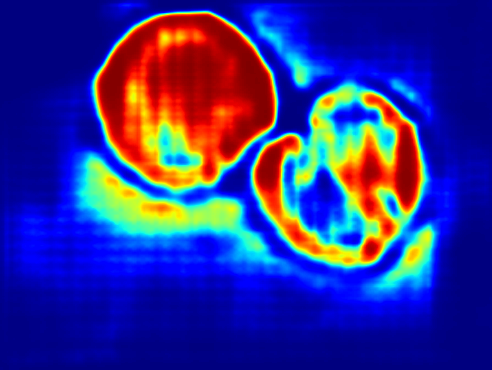}
    \end{minipage}\hfill
    \begin{minipage}{0.07\textwidth}\centering
        \includegraphics[width=\textwidth]{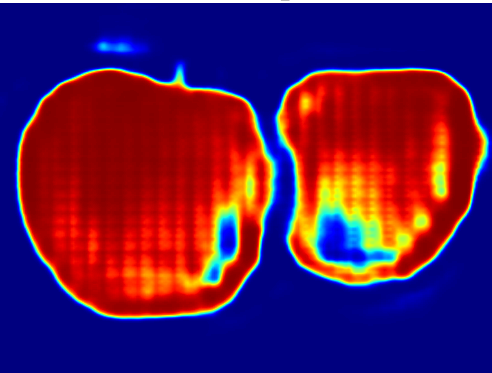}
    \end{minipage}\hfill
    \begin{minipage}{0.07\textwidth}\centering
        \includegraphics[width=\textwidth]{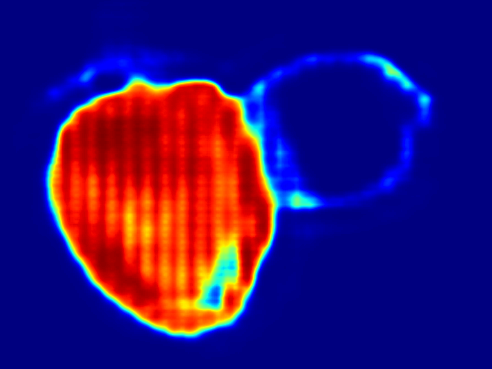}
    \end{minipage}\hfill
    \begin{minipage}{0.07\textwidth}\centering
        \includegraphics[width=\textwidth]{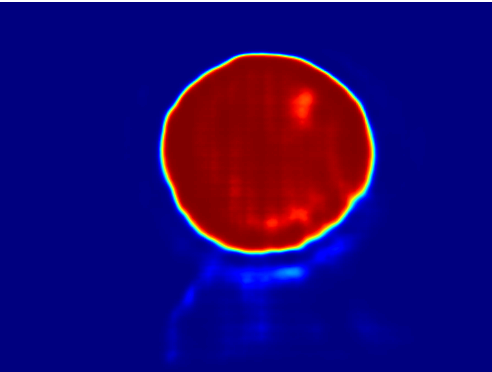}
    \end{minipage}\hfill
    \begin{minipage}{0.07\textwidth}\centering
        \includegraphics[height=0.8\textwidth]{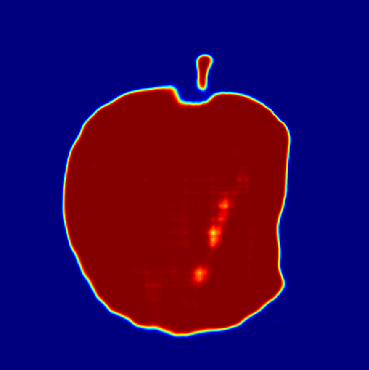}
    \end{minipage}

    \caption{Different filters' visualization results.}
    \label{vis}
\end{figure}
 
 \section{Conclusion}
This study presents an \textcolor{black}{innovative} approach to graph clustering, specifically addressing the challenges posed by homophilic and heterophilic graph structures. One of our key contributions is the development of unsupervised strategies for graph restructuring, enabling the \textcolor{black}{extraction} of homophilic and heterophilic information based on the commonality of graphs. In addition, we design an adaptive GNN to better \textcolor{black}{leverage} the characteristics of restructured graphs. The incorporation of the squeeze-and-excitation block \textcolor{black}{enhances} the significance of crucial features to further improve performance, marking the \textcolor{black}{first use} of this concept in graph clustering. Theoretical and empirical results validate the effectiveness of our approach in both graph and visual tasks. \textcolor{black}{Future work can explore more efficient sampling strategies, adaptive model compression, and broader domain evaluations to further enhance the applicability and robustness of the proposed method.}
 
\bibliography{sth}
\bibliographystyle{IEEEtran}

\begin{IEEEbiography}[{\includegraphics[width=1in,height=1.25in,clip,keepaspectratio]{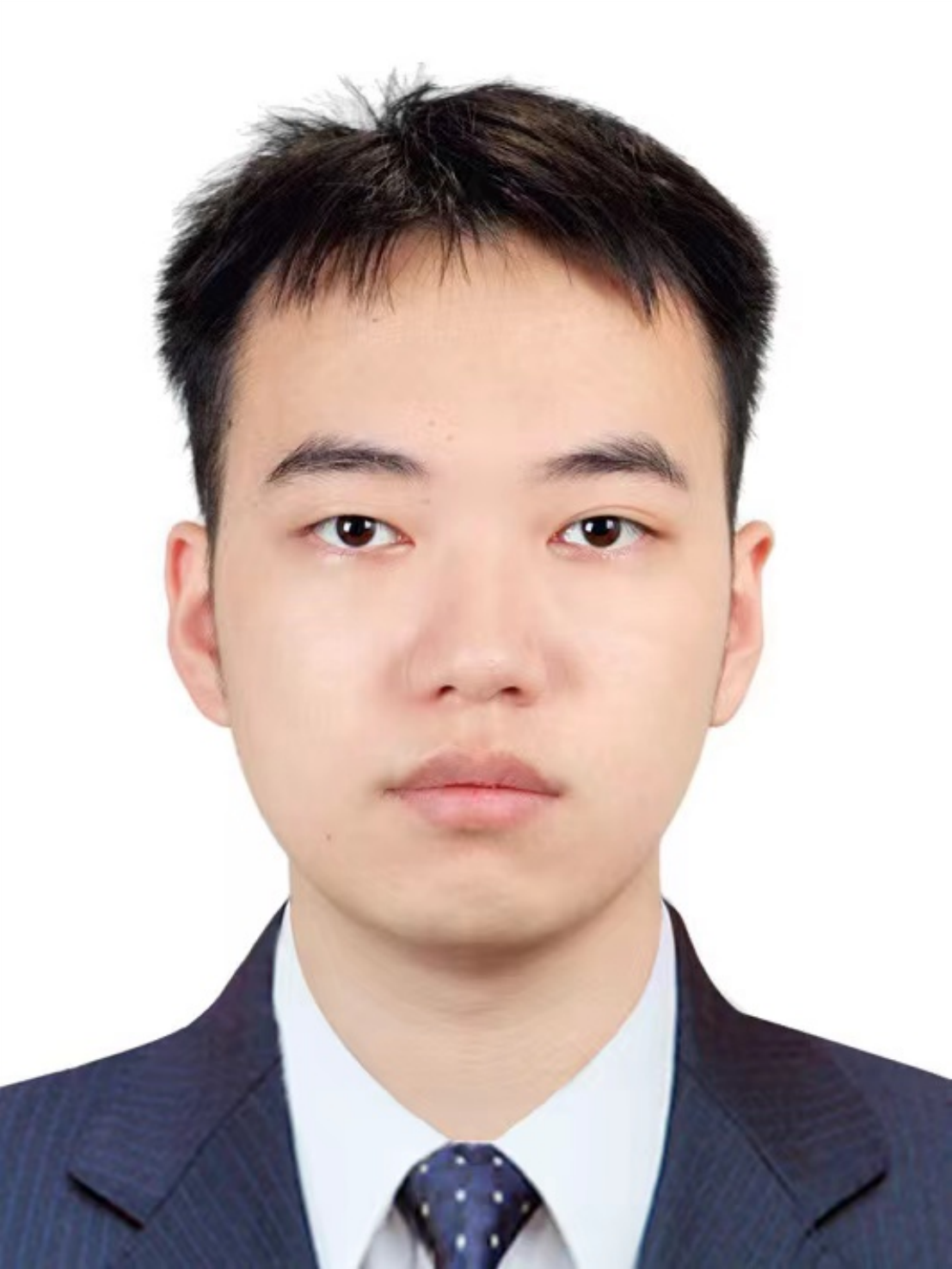}}]{Xuanting Xie}
received the B.Sc. degree in Software Engineering in
2021 and is currently studying for a Ph.D. degree with School of Computer
Science and Engineering, the University of Electronic Science and Technology
of China, Chengdu, China. His main research interests include graph clustering, graph foundation models, and LLM.
\end{IEEEbiography}

\begin{IEEEbiography}[{\includegraphics[width=1in,height=1.25in,clip,keepaspectratio]{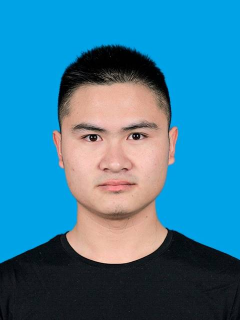}}]{Erlin Pan}
received the B.Sc. degree in computer science and technology in 2021 and the M.S. degree in 2024 from the School of Computer Science and Engineering, University of Electronic Science and Technology of China, Chengdu, China. He is currently working at Alibaba Group. His main research interests include graph learning, multiview learning, and clustering.
\end{IEEEbiography}
\begin{IEEEbiography}[{\includegraphics[width=1in,height=1.25in,clip,keepaspectratio]{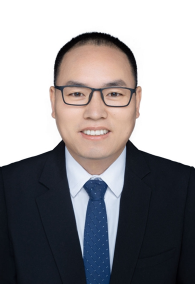}}]{Zhao Kang}
received the Ph.D. degree in computer science from the Southern Illinois University Carbondale,
Carbondale, IL, USA, in 2017. He is currently a Professor in
the School of Computer Science and Engineering, University
of Electronic Science and Technology of China, Chengdu,
China. He has published more than 100 research articles in
top-tier conferences and journals, including ICML, NeurIPS,
ICLR, AAAI, IJCAI, IEEE TRANSACTIONS
ON CYBERNETICS, IEEE TRANSACTIONS ON IMAGE
PROCESSING, IEEE TRANSACTIONS ON KNOWLEDGE AND
DATA ENGINEERING, and IEEE TRANSACTIONS ON NEURAL
NETWORKS AND LEARNING SYSTEMS. His research interests include graph machine
learning and large language models. Dr. Kang has been an AC/SPC/PC Member for a number of top conferences, such as NeurIPS, ICML, ICLR, AAAI, IJCAI, CVPR, and SIGKDD. He serves as an Associate Editor for Neural Networks and Pattern Recognition.
\end{IEEEbiography}
\begin{IEEEbiography}[{\includegraphics[width=1in,height=1.25in,clip,keepaspectratio]{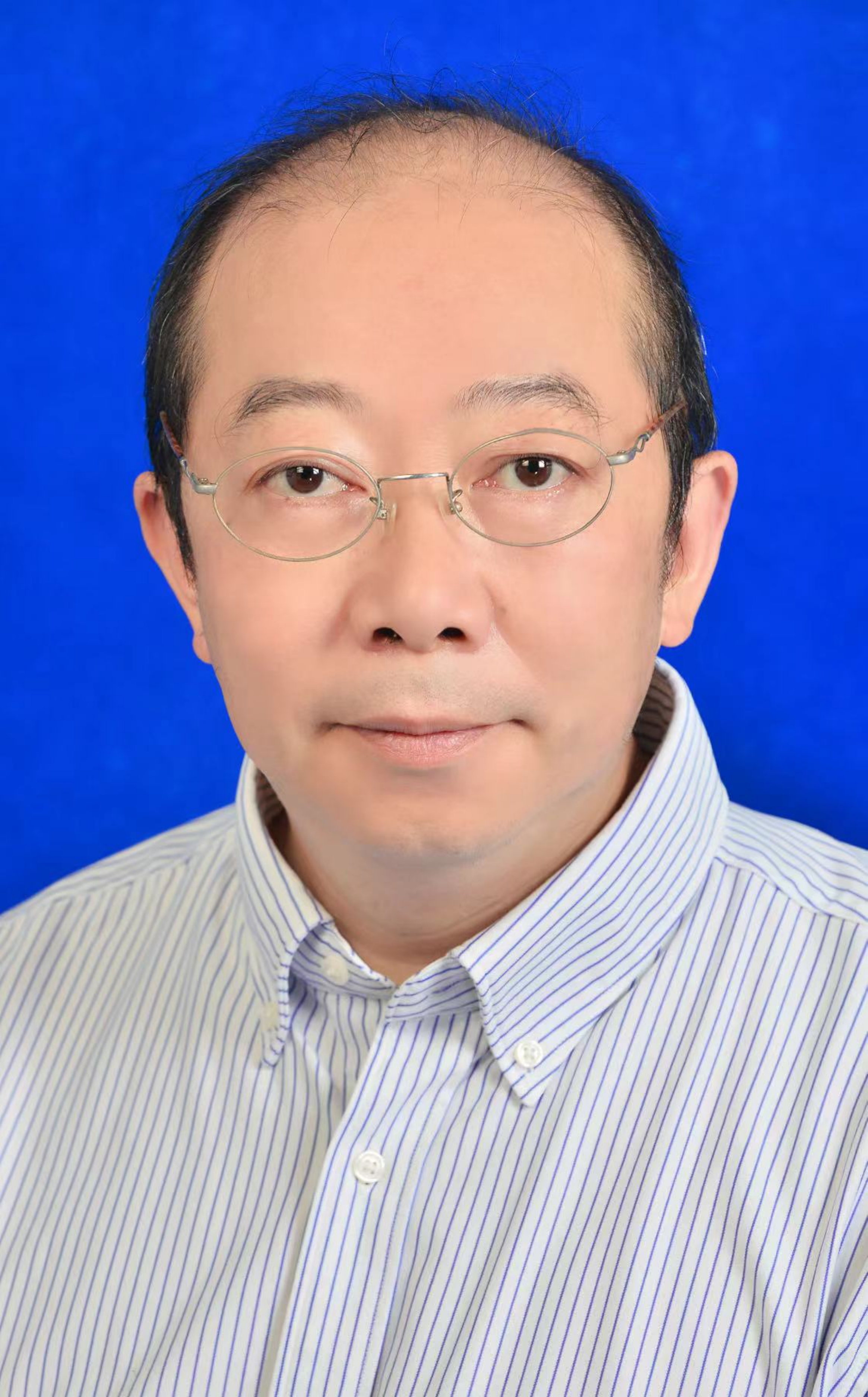}}]{Wenyu Chen}
received the Ph.D. degree in computer science from the University of Electronic Science and Technology of China, Chengdu, China, in 2009. He is currently a Professor with the School of Computer Science and Engineering, University of Electronic Science and Technology of China. His research work focuses on pattern recognition, natural language processing, and neural network.
\end{IEEEbiography}
\begin{IEEEbiography}[{\includegraphics[width=1in,height=1.25in,clip,keepaspectratio]{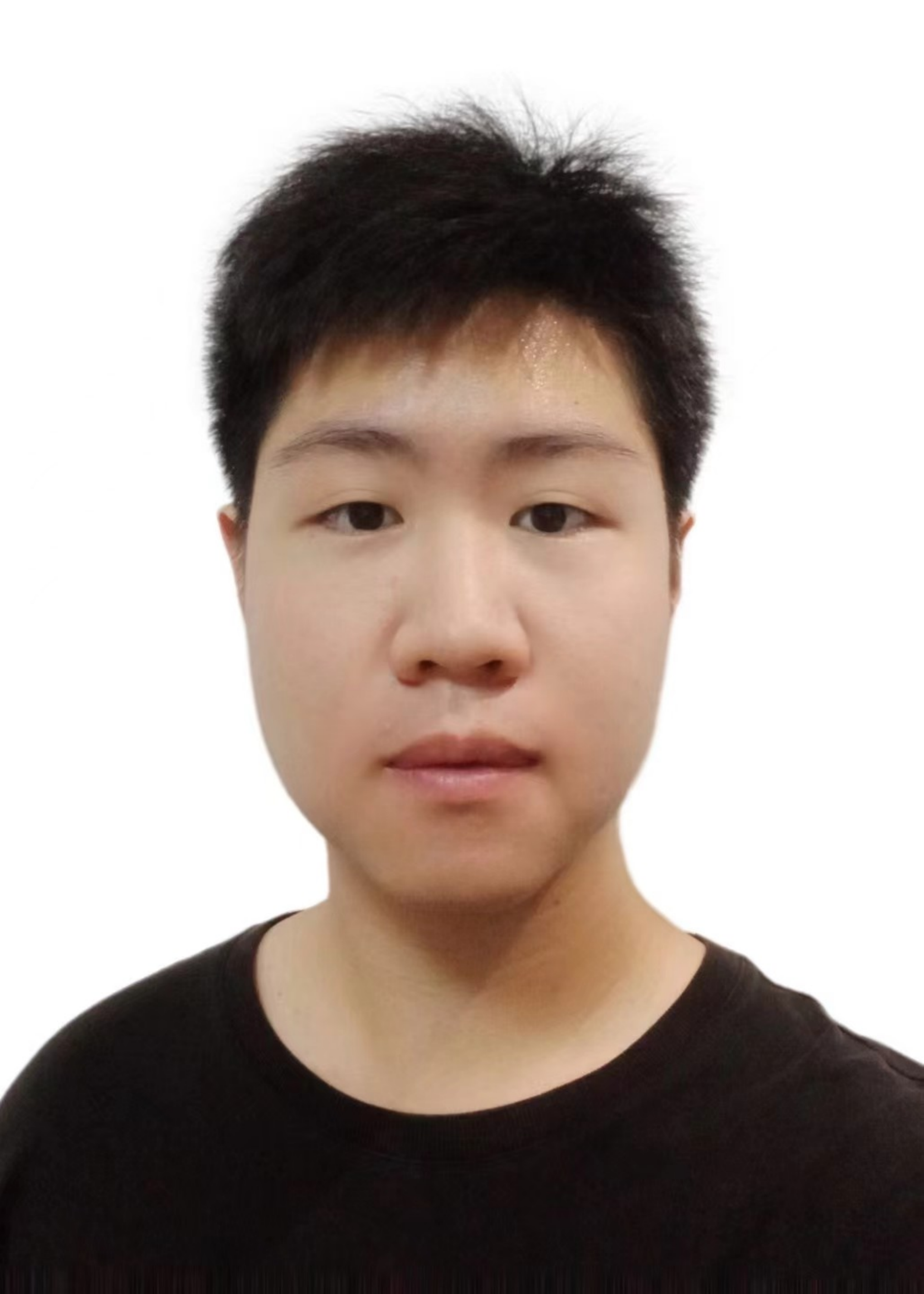}}]{Bingheng Li}
received his undergraduate degree in 2024 from the University of Electronic Science and Technology of China. He is currently pursuing a Ph.D. degree at the Michigan State University, USA. His research focuses on graph data mining and graph foundation models.
\end{IEEEbiography}

\end{document}